\NeedsTeXFormat{LaTeX2e}

\documentclass[12pt]{article}
\usepackage{localtr}
\usepackage{mathptmx}
\usepackage{amsthm}
\usepackage{amsmath}
\usepackage{amsfonts}
\usepackage{latexsym}
\usepackage{listings}
\usepackage[x11names,svgnames]{xcolor}
\usepackage{tikz}
\usetikzlibrary{automata,positioning}
\usetikzlibrary{shapes}
\usepackage{adjustbox}
\usepackage[compatibility=false]{caption}
\usepackage{subcaption}
\usepackage{algpseudocode}
\usepackage{algorithm}
\usepackage{cmll}
\usepackage{todonotes}
\usepackage{soul}
\usepackage{url}
\usepackage{hyperref}
\usepackage{thmtools}
\usepackage{forest}
\usepackage{vmargin}
\setpapersize{USletter}
\setmarginsrb{25.4mm}{25.4mm}{25.4mm}{17.4mm}{0mm}{0mm}{0mm}{8mm}

\newcommand{\comment}[1]{{}}

\newcommand{\when}{\ \mbox{${:\!\!-\;}$}}

\newtheorem{Lem}{Lemma}

\newtheorem{Thm}[Lem]{Theorem}

\newtheorem{Def}{Definition}

\newcommand{\lsem}{\mbox{$[\![$}}
\newcommand{\rsem}{\mbox{$]\!]$}}
\newcommand{\sem}[1]{\mbox{$\lsem #1 \rsem$}}

\newcommand{\id}[1]{\mbox{\it #1\/}}

\newcommand{\kw}[1]{\mbox{\tt #1}}

\def\p@enumiii{\theenumi(\theenumii)}

\setlength{\marginparwidth}{1.25in}
\newcommand\annotate[1]%
{\textcolor{blue}{$^\maltese$}%
\-\marginpar[\raggedleft\scriptsize \textcolor{red}{#1}]%
{\raggedright\scriptsize \textcolor{red}{#1}}}

\newcommand{\myparagraph}[1]{\paragraph{\textbf{#1}}}

\newcommand\email[1]{{\normalfont\rmfamily
  \itshape\textup{(}e-mail: \textup{\texttt{#1})}}}

\date{}

\begin{document}
\label{firstpage}
\title{Inference in Probabilistic Logic Programs using
  Lifted~Explanations\thanks{This work was supported in part by NSF
    Grant IIS-1447549.}}
\author{Arun Nampally and C.\ R.\ Ramakrishnan\\
Computer Science Department,\\
Stony Brook University, Stony Brook, NY 11794.\\
\email{anampally@cs.stonybrook.edu, cram@cs.stonybrook.edu}
}

\maketitle

\lstset{%
  language=Prolog,
  basicstyle=\ttfamily,
  commentstyle=\rmfamily\it\color{DarkBlue},
  columns=fullflexible,
  numbers=left, numberstyle=\tiny, stepnumber=1, numbersep=5pt,
  firstnumber=auto,
  numberfirstline=true,
  numberblanklines=true
}

\begin{abstract}
  In this paper, we consider the problem of lifted inference in the
  context of Prism-like probabilistic logic programming languages.
  Traditional inference in such languages involves the construction of
  an explanation graph for the query and computing probabilities over
  this graph.  When evaluating queries over probabilistic logic
  programs with a large number of instances of random variables,
  traditional methods treat each instance separately.  For many
  programs and queries, we observe that explanations can be summarized
  into substantially more compact structures, which we call “lifted
  explanation graphs”.  In this paper, we define lifted explanation
  graphs and operations over them.  In contrast to existing lifted
  inference techniques, our method for constructing lifted
  explanations naturally generalizes existing methods for constructing
  explanation graphs.  To compute probability 
  of query answers, we solve recurrences generated from the lifted
  graphs.  We show examples where the use of our technique reduces the
  asymptotic complexity of inference.
\end{abstract}

\section{Introduction}\label{sec:intro}
\myparagraph{Background.}  
Probabilistic Logic Programming (PLP) provides a declarative
programming framework to specify and use combinations of logical and
statistical models.  A number of programming languages and systems
have been proposed and studied under the framework of
PLP, e.g. PRISM~\cite{sato1997prism}, Problog~\cite{de2007problog},
PITA~\cite{riguzzi2011pita} and Problog2~\cite{dries2015problog2} etc.  These
languages have similar declarative semantics based on the
\emph{distribution semantics}~\cite{sato2001parameter}.  Moreover, the
inference algorithms used in many of these systems to evaluate the
probability of query answers, 
e.g.~PRISM, Problog and PITA, are based on a
common notion of \emph{explanation graphs}.

At a high level, the inference procedure follows traditional query
evaluation over logic programs.  Outcomes of random variables, i.e.,
the \emph{probabilistic choices}, are abduced during query evaluation.
Each derivation of an answer is associated with a set of outcomes of
random variables, called its explanation, under which the answer
is supported by the derivation. \comment{ Consequently, the collection of such
sets over all derivations of an answer represents the set of all
outcomes under which the answer can be derived.}  Systems differ on how
the explanations are represented and manipulated.  Explanation
graphs in PRISM are represented using tables, and
under mutual exclusion assumption, multiple explanations are
combined by adding entries to tables.
In Problog and PITA, explanation graphs
are represented by Binary Decision Diagrams (BDDs), with probabilistic
choices mapped to propositional variables in BDDs.

\myparagraph{Driving Problem.}
Inference based on explanation graphs does not scale well to
logical/statistical models with large numbers of random processes and
variables.  Several \emph{approximate inference} techniques have been
proposed to estimate the probability of answers when exact inference
is infeasible. In general, large logical/statistical models involve
\emph{families} of independent, identically distributed (i.i.d.)
random variables.   Moreover, in many models, inference often
depends on the outcomes of random processes but not on the identities of 
random variables with the particular outcomes.
However, query-based inference methods will instantiate
each random variable and the explanation graph will represent each of
their outcomes.  Even when the graph may ultimately exhibit symmetry
with respect to random variable identities, and many parts of the
graph may be shared,  the computation that produced these graphs may
not be shared.    \emph{This paper presents
a structure for representing explanation graphs compactly
by exploiting the symmetry with respect to i.i.d random variables, and a procedure to build this structure
without enumerating each instance of a random process.}

\myparagraph{Illustration.}  We illustrate the problem and our
approach using the simple example in Figure~\ref{fig:intro-ex}, which
shows a program describing a process of tossing a number of
i.i.d. coins, and evaluating if at least two of them came up
``heads''.  The example is specified in an extension of the PRISM
language, called Px.  Explicit random processes of PRISM enables a
clearer exposition of our approach.  In PRISM and Px, a special
predicate of the form \texttt{msw($p$, $i$, $v$)} describes, given a
random process $p$ that defines a family of i.i.d. random variables,
that $v$ is the value of the $i$-th random variable in the family.
The argument $i$ of \texttt{msw} is called the \emph{instance}
argument of the predicate.  In this paper, we consider Param-Px, a
further extension of Px to define parameterized programs.  In
Param-Px, a built-in predicate, \texttt{in} is used to specify
membership; e.g. \texttt{$x$ in $s$} means $x$ is member of an
enumerable set $s$.  The size of $s$ is specified by a separate
\texttt{population} directive.

The program in Figure~\ref{fig:intro-ex} defines a family of random
variables with outcomes in \{\texttt{h}, \texttt{t}\} generated by 
\texttt{toss}.  The instances that index these random variables are drawn
from the set \texttt{coins}.  Finally, 
predicate \texttt{twoheads} is defined to hold if tosses of at least two
distinct coins come up ``heads''.  

\begin{figure}[h]
  \centering
  \small
  \begin{tabular}{c@{\extracolsep{-3em}}c@{\extracolsep{2em}}c}
    \begin{minipage}[c]{0.35\linewidth}
      \begin{adjustbox}{scale=0.68}
        \begin{lstlisting}[name=IntroExample]
% Two distinct tosses show "h"
twoheads :-
    X in coins, 
    msw(toss, X, h),
    Y in coins, 
    {X < Y},
    msw(toss, Y, h).
    
% Cardinality of coins:
:- population(coins, 100).

% Distribution parameters:
:- set_sw(toss, 
categorical([h:0.5, t:0.5])).
\end{lstlisting}
\end{adjustbox}
\end{minipage}
&
  \begin{minipage}[c]{0.33\textwidth}
    \begin{adjustbox}{scale=0.65}
      \begin{tikzpicture}[shorten >=1pt,%
          inner sep = 0.5mm, %
          node distance=2.25cm,%
          every state/.style={circle}, 
          on grid,auto,%
          initial text=,%
          ellipsis/.style={dotted},%
          bend angle=30]
          \node[state,rectangle] (s11) {$(\mathtt{toss},1)$};
          \node[state,rectangle] (s21) [below right=of s11] {$(\mathtt{toss},2)$};
          \node[state,rectangle] (s22) [below left=of s11] {$(\mathtt{toss},2)$};
          \node[state,rectangle] (s31) [below right=of s21] {$(\mathtt{toss},n-1)$};
          \node[state,rectangle] (s32) [below left=of s21] {$(\mathtt{toss},3)$};
          \node[state] (t3) [below left=of s22] {$1$};
          \node[state] (s41) [below right=of s31] {$0$};
          \node[state,rectangle] (s42) [below left=of s31] {$(\mathtt{toss},n)$};
          \node[state] (t4) [below left=of s32] {$1$};
          \node[state] (sn1) [below right=of s42] {$0$};
          \node[state] (sn2) [below left=of s42] {$1$};

          \path[->] 
          (s11) edge node {t} (s21)
          (s11) edge [swap] node {h} (s22)
          (s21) edge  [ellipsis] node {t} (s31)
          (s21) edge [swap] node {h} (s32)
          (s22) edge node {t} (s32)
          (s22) edge [swap] node {h} (t3)
          (s31) edge node {t} (s41)
          (s31) edge [swap] node {h} (s42)
          (s32) edge [ellipsis] node {t} (s42)
          (s32) edge [swap] node {h} (t4)
          (s42) edge node {t} (sn1)
          (s42) edge [swap] node {h} (sn2);
        \end{tikzpicture}
      \end{adjustbox}
   \end{minipage}
&
  \begin{minipage}[c]{0.3\textwidth}
     \begin{adjustbox}{scale=0.7}
        \begin{tikzpicture}[shorten >=1pt,%
          inner sep = 0.5mm, %
          node distance=2.25cm,%
          every state/.style={circle}, 
          on grid,auto,%
          initial text=,%
          highlight/.style={draw=red, text=red},%
          bend angle=30]

          \node[state, rectangle] (c) {$\exists \mathtt{X}. \exists \mathtt{Y}. \mathtt{X} <
            \mathtt{Y}$};
          \node[state, rectangle,rounded corners=2mm] (s11) [below = of c] {$(\mathtt{toss}, \mathtt{X})$};
          \node[state, rectangle,rounded corners=2mm] (s21) [below left=of s11] {$(\mathtt{toss}, \mathtt{Y})$};
          \node[state] (s22) [below right=of s11] {$0$};
          \node[state] (s31) [below left=of s21] {$1$};
          \node[state] (s32) [below right=of s21] {$0$};

          \path[->] 
          (s11) edge [swap] node {h} (s21)
          (s11) edge node {t} (s22)
          (s21) edge [swap] node {h} (s31)
          (s21) edge node {t} (s32);
        \end{tikzpicture}
      \end{adjustbox}
    \end{minipage}
\\
(a) Simple Px program & (b) Ground expl. Graph
    & (c) Lifted expl. Graph\\
  \end{tabular}
  \caption{Example program and ground explanation graph}
  \label{fig:intro-ex}
\end{figure}

\myparagraph{State of the Art, and Our Solution.}
Inference in PRISM, Problog and PITA follows the
structure of the derivations for a query.  Consider the program in 
Figure~\ref{fig:intro-ex}(a) and let the cardinality of the set of coins be $n$.   
The query \texttt{twoheads} will take $\Theta(n^2)$ time, since it will
construct bindings to both \texttt{X} and \texttt{Y} in the clause
defining \texttt{twoheads}.    However, the size of an explanation
graph is $\Theta(n)$; see Figure~\ref{fig:intro-ex}(b).
Computing the probability of the query over this graph will also take $\Theta(n)$
time.  

In this paper, we present a technique to construct a symbolic version
of an explanation graph, called a \emph{lifted explanation graph} that
represents instances symbolically and avoids
enumerating the instances of random processes such as \texttt{toss}.
The lifted explanation graph for query \texttt{twoheads} is shown in
Figure~\ref{fig:intro-ex}(c).   Unlike traditional explanation graphs
where nodes are specific instances of random variables, nodes in the
lifted explanation graph may be parameterized by their instance (e.g
$(\mathtt{toss}, X)$ instead of $(\texttt{toss}, 1)$). A set of constraints on
those variables, specify the allowed groundings.

Note that the graph size is independent of the size of the population.
Moreover, the graph can be constructed in time independent of the population
size as well.  Probability computation is performed by first deriving
recurrences based on the graph's structure and then solving the recurrences. The
recurrences for probability computation derived from the graph in
Fig.~\ref{fig:intro-ex}(c) are shown in Fig.~\ref{fig:intro-ex-eqn}. In the
figure, the equations with subscript 1 are derived from the root of the graph;
those with subscript 2 from the left child of the root; and where $\pi$ is the
probability that \texttt{toss} is ``\texttt{h}''.  Note that the probability of
the query, $f_1(n)$, can be computed in $\Theta(n)$ time from the recurrences.

\begin{figure}
  \centering
  \begin{minipage}{1.0\linewidth}
  \begin{adjustbox}{scale=0.84}
    \small
    \begin{tabular}{l|l}
    $\begin{array}{rl}
      f_1(n)  = & h_1(1, n)\\
      h_1(i, n) = & \left\{
        \begin{array}{ll}
          g_1(i, n)  + (1 - \widehat{f_1}) \cdot  h_1(i+1,n) & \mbox{ if } i < n\\
          g_1(i, n) & \mbox{ if } i = n
        \end{array} \right.\\
      g_1(i, n) = & \pi \cdot f_2(i, n)\\
      \widehat{f_1} = & \pi \\
      \end{array}$
      &
    $\begin{array}{rl}
      f_2(i, n) = & \left\{
        \begin{array}{ll}
          h_2(i+1, n) & \mbox{ if } i < n\\
          0 & \mbox{ otherwise}
        \end{array} \right.\\
      h_2(j, n) = & \left\{
        \begin{array}{ll}
          g_2 + (1 -\widehat{f_2}) \cdot  h_2(j+1, n) & \mbox{ if } j < n\\
          g_2 & \mbox{ if } j = n
        \end{array}
      \right.\\
      g_2 = & \pi\\
      \widehat{f_2} = & \pi
    \end{array}$
  \end{tabular}
  \end{adjustbox}
  \end{minipage}
\caption{Recurrences for computing probabilities for Example in \protect{Fig.~\ref{fig:intro-ex}}}
\label{fig:intro-ex-eqn}
\end{figure}

These recurrences can be solved in $O(n)$ time with tabling or dynamic
programming.  Moreover,  in certain cases, it is possible to obtain a
closed form from a recurrence.  For instance, noting that $g_2$ is
independent of its parameters, we get $h_2(j,n) = 1-(1-\pi)^{n-j+1}$.

\myparagraph{Lifted explanations vs. Lifted Inference.}  Our work is a
form of \emph{lifted inference}, a set of techniques that have been
intensely studied in the context of first-order graphical models and
Markov Logic Networks~\cite{poole2003first,braz2005lifted,milch2008lifted}.  Essentially, lifted explanations
provide a way to perform lifted inference over PLPs by leveraging
their query evaluation mechanism.  Directed first-order graphical models~\cite{kisynski2010aggregation} can be
readily cast as PLPs, and our technique can be used to perform lifted
inference over such models.  Our solution, however, does not
cover techniques based on counting
elimination~\cite{braz2005lifted,milch2008lifted}.  

It should be noted that Problog2 does not construct query-specific
explanation graphs.  Instead, it uses a knowledge compilation approach
where the models of a program are represented by a
propositional boolean formula.  These formulae, in turn, are
represented in a compact standard form such as 
dDNNFs or SDDs~\cite{darwiche2001tractable,darwiche2011sdd}.  Query answers
and their probabilities are then computed using linear-time algorithms
over these structures.  

The knowledge compilation approach has been
extended to do a generalized form of lifted inference using
first-order model counting~\cite{van2011lifted}.  
This technique performs lifted inference, including inversion and
counting elimination over a large class of
first order models.  However, first order
model counting is defined only when the problem can be stated in a
first-order constrained CNF form.  Problems such as the example in
Figure~\ref{fig:intro-ex} cannot be written in that form.
To address this, a skolemization procedure
which eliminates existential quantifiers and converts to first-order CNF without
adding function symbols was proposed by
\citeN{broeck2014skolemization}.  While the knowledge compilation
approach takes a core lifted inference procedure and moves to apply it
to a class of logic programs, our approach generalizes existing
inference techniques to perform a form of lifted inference.

\myparagraph{Contributions.}  The technical contribution of this paper
is two fold. 
\begin{enumerate}
\item We define a lifted explanation structure, and operations
  over these structures (see Section~\ref{sec:liftedexp}).  We also
  give method to construct such structures during query evaluation,
  closely following the techniques used to construct explanation
  graphs.
\item We define a
  technique to compute probabilities over such structures by deriving
  and solving recurrences (see Section~\ref{sec:inference}).  We
  provide examples to illustrate the complexity gains due to our
  technique over traditional inference.
\end{enumerate}
The rest of the paper begins by defining parameterized Px
programs and their semantics (Section~\ref{sec:parampx}).
After presenting the main technical work, the paper concludes with a 
discussion of  related work.
(Section~\ref{sec:related}).

\comment{

The example describes a system with a
large number of independent agents, each of whom can be in one of two
states, active or inactive, denoted by ``\texttt{a}'' and
``\texttt{b}'' in the program.  The evolution of agent states is
governed by a Markov chain; at each time step, an agent may change its
state according to a given probability distribution that depends only
on the agent's current state.  The predicate \texttt{agent(X,S,T)},
denoting that the agent \texttt{X} is at state \texttt{S} at time
\texttt{T}, is defined based on a Markov chain.

In the program,
\texttt{msw(init,X,S)} denotes that \texttt{S} is the initial state of
agent \texttt{X}; the domain and distribution of \texttt{S} is
specified by \texttt{value}and \texttt{set\_sw} directives for process
\texttt{init}.  Similarly, \texttt{msw(step(S1),(X,T),S)} represents a
state transition for agent \texttt{X} from \texttt{S1} to \texttt{S}
at time \texttt{T} .  Note the use of pair \texttt{(X,T)} as the
instance variable, thereby defining a family of i.i.d. random
variables indexed by \texttt{(X,T)}, each variable associated with a
specific agent \texttt{X} at a specific time \texttt{T}.

The predicate \texttt{samestate(T)} holds whenever there are at least
two distinct agents in ``\texttt{a}'' state at time \texttt{T}.  In
the program, \texttt{population} predicate specifies the domain of
agent ids.  Note
that the identity of the agents is irrelevant to the value of
\texttt{samestate}.  However, an explanation graph for
\texttt{samestate($t$)} for a specific time instant $t$ will have
$\Theta{nt}$ nodes if there are $n$ distinct agents.   The enumeration
over agents can be avoided by ``lifting'' the explanation graphs by
parameterizing them with (quantified) variables.    

}

\comment{
\begin{figure}[h]
  \centering
\small
\begin{tabular}{cp{0.0in}cp{0.0in}c}
  \begin{minipage}[t]{2in}
    \begin{adjustbox}{scale=0.9}
\begin{lstlisting}[name=IntroExample]
% Agent X is in state S at time T.
agent(X, S, 0) :-
    msw(init, X, S).
agent(X, S, T) :-
    T > 0, T1 is T-1,
    agent(X, S1, T1),
    msw(step(S1), (X,T), S).
% Domains:
values(init, [a,b]).
values(trans(_), [a,b]).
% Distribution parameters:
set_sw(init, [0.01,0.99]).
set_sw(step(a), [0.1,0.9]).
set_sw(step(b), [0.01,0.99]).

% Two agents are in state "a"
%     simultaneously
samestate(T) :-
    instance(p1, X),
    agent(X, a, T),
    instance(p1, Y),
    X \= Y,
    agent(X, a, T).
population(p1, 100).
\end{lstlisting}
    \end{adjustbox}
  \end{minipage}
& Graph 1 & Graph 2\\
(a) Program describing a collection of agents && (b) Ground
Explanation Graph && (c) Lifted Explanation Graph
\end{tabular}
  \caption{PLP description of a collection of agents,}
  \label{fig:intro-ex}
\end{figure}
}


\section{Parameterized Px Programs}\label{sec:parampx}
The PRISM language follows Prolog's syntax.  It adds a binary
predicate \texttt{msw} to introduce random variables into an
otherwise familiar Prolog program.  Specifically, in \texttt{msw($s$,
  $v$)}, $s$ is a ``switch'' that represents a random process which
generates a family of random variables, and $v$
is bound to the value of a variable in that family.  The domain and
distribution parameters of the switches are specified using
\texttt{value} facts and \texttt{set\_sw} directives, respectively. 
Given a switch $s$, we use $D_s$ to denote the domain of $s$, and
$\pi_s: D_s \rightarrow [0,1]$ to denote its probability distribution.

The model-theoretic distribution semantics explicitly identifies each
member of a random variable family with an \emph{instance} parameter.
In the PRISM system, the binary \texttt{msw} is interpreted
stochastically, generating a new member of the random variable family
whenever an \texttt{msw} is encountered during inference.  

\subsection{Px and Inference}\label{sec:pxandinf}
The Px language extends the PRISM language in three ways.  Firstly, the
\texttt{msw} switches in Px are ternary, with the addition of an
explicit \emph{instance} parameter.  This brings the language closer
to the formalism presented when describing PRISM's
semantics~\cite{sato2001parameter}.  Secondly, Px aims to compute the
distribution semantics with no assumptions on the structure of the
explanations.  Thirdly, in contrast to PRISM,
the switches in Px can 
be defined with a wide variety of univariate distributions, including
continuous distributions (such as Gaussian) and infinite discrete
distributions (such as Poisson).  
\comment{
Syntactically, Px's \texttt{set\_sw}
directive combines the specification of the
domain and distribution of a switch.  At the time of this writing, the Px
system supports approximate sampling-based inference in general, and
exact inference when only finite discrete distributions are used.
The semantics of programs with continuous random variables was described
in~\cite{IRR:ICLP12,islam2012inference}.  Algorithms for exact inference over programs with
Gaussian- and Gamma-distributed switches, and parameter learning over
program with Gaussian-distributed switches are known as
well~\cite{islam2012inference}. 
}
However, in this paper, we consider only programs with finite
discrete distributions.

Exact inference of Px programs with finite discrete distributions uses
explanation graphs with the following structure.
\begin{Def}[Ground Explanation Graph]\label{def:explanation-graph}
Let $S$ be the set of ground switches in a Px program $P$, and $D_s$ be
the domain of switch $s \in S$.  Let ${\cal T}$ be the set of all
ground terms over symbols in $P$.   Let ``$\prec$'' be  a total order
over $S \times {\cal T}$ such that $(s_1,t_1) \prec (s_2,t_2)$ if either $t_1 <
t_2$ or $t_1=t_2$ and $s_1 < s_2$.

A \emph{ground explanation tree}
over $P$ is a rooted tree  $\gamma$ such that:
\begin{itemize}
\item Leaves in $\gamma$ are labeled $0$ or $1$.
\item Internal nodes in $\gamma$ are labeled $(s,z)$ where $s\in S$
  is a switch, and $z$ is a ground term over symbols in $P$.
\item For node labeled $(s,z)$, there are $k$ outgoing edges to subtrees, where
  $k=|D_s|$.  Each edge is labeled with a unique $v \in D_s$. 
\item Let $(s_1,z_1), (s_2,z_2), \ldots, (s_k, z_k), c$ be the sequence of node labels in a
  root-to-leaf path in the tree, where $c\in \{0,1\}$.  Then
  $(s_i,z_i) \prec (s_j, z_j)$ if $i<j$ for all $i,j
  \leq k$.  As a corollary, node labels along any root to leaf path in the tree are unique.
\end{itemize}
An \emph{explanation graph} is a DAG representation of a ground
explanation tree.  \hfill $\Box$
\end{Def}
We use $\phi$ to denote explanation graphs.  We use
$(s,t)[v_i:\phi_i]$ to denote an explanation graph whose root is
labeled $(s,t)$, with each edge labeled $v_i$ (ranging over a suitable
index set $i$), leading to subgraph $\phi_i$.

\sloppypar
Consider a sequence of alternating node and edge labels in a
root-to-leaf path: $(s_1,z_1), v_1, (s_2,z_2), v_2, \ldots, (s_k,z_k),
v_k, c$.   Each
such path enumerates a set of random variable valuations $\{s_1[z_1] =
v_1, s_2[z_2] = v_2, \ldots, s_k[z_k] = v_k\}$.  When $c = 1$, the set
of valuations forms an explanation.   An explanation graph thus
represents a set of explanations.

Note that explanation trees and graphs resemble decision diagrams.
Indeed, explanation graphs are implemented using Binary Decision
Diagrams~\cite{bryant1992symbolic} in PITA and Problog; and Multi-Valued Decision
Diagrams~\cite{srinivasan1990algorithms} in Px.  The \emph{union} of two sets of
explanations can be seen as an ``\emph{or}'' operation over
corresponding explanation graphs.  Pair-wise union of explanations in
two sets is an ``\emph{and}'' operation over corresponding explanation graphs.

\paragraph{Inference via Program Transformation.}
Inference in Px is performed analogous to that in
PITA~\cite{riguzzi2011pita}.  Concretely, inference is done by
translating a Px program to one that explicitly constructs explanation
graphs, performing tabled evaluation of the derived program, and
computing probability of answers from the explanation graphs.  We
describe the translation for definite pure programs; programs with
built-ins and other constructs can be translated in a similar manner.

First every clause containing a disequality constraint is replaced by two
clauses using less-than constraints. Next, for every user-defined atom $A$ of
the form $p(t_1, t_2, \ldots, t_n)$, we define $\id{exp}(A,E)$ as atom
$p(t_1, t_2, \ldots, t_n, E)$ with a new predicate $p/(n+1)$, with $E$ as an
added ``explanation'' argument.  For such atoms $A$, we also define
$\id{head}(A,E)$ as atom $p'(t_1, t_2, \ldots, t_n, E)$ with a new predicate
$p'/(n+1)$.  A \emph{goal} $G$ is a conjunction of atoms, where $G = (G_1, G_2)$
for goals $G_1$ and $G_2$, or $G$ is an atom $A$.  Function $\id{exp}$ is
extended to goals such that
$\id{exp}((G_1, G_2)) = ((\id{exp}(G_1,E_1), \id{exp}(G_2,E_2)),
\kw{and}(E_1,E_2,E))$, where \texttt{and} is a predicate in the translated
program that combines two explanations using conjunction, and $E_1$ and $E_2$
are fresh variables.  Function $\id{exp}$ is also extended to \texttt{msw} atoms
such that $\id{exp}(\mathtt{msw}(p,i,v), E)$ is $\texttt{rv}(p,i,v,E)$, where
\texttt{rv} is a predicate that binds $E$ to an explanation graph with root
labeled $(p,i)$ with an edge labeled $v$ leading to a $1$ child, and all other
edges leading to $0$.

Each clause of the form $A \when G$ in a Px program is
translated to a new clause $\id{head}(A,E) \when \id{exp}(G,E)$.  For
each predicate $p/n$, we define 
$p(X_1, X_2, \ldots X_n, E)$ to be such that $E$ is the disjunction of
all $E'$ for  $p'(X_1, X_2, \ldots X_n, E')$.  As in PITA, this is done using
answer subsumption.

\paragraph{Computing Answer Probabilities.}
Probability of an answer is determined by first materializing the explanation
graph, and then computing the probability over the graph. The probability
associated with a node in the graph is computed as the sum of the products of
probabilities associated with its children and the corresponding edge
probabilities. The probability associated with an explanation graph $\varphi$,
denoted $\id{prob}(\varphi)$ is the probability associated with the root. This
can be computed in time linear in the size of the graph by using dynamic
programming or tabling.

\comment{
\begin{Def}[Explanation Probability]\label{def:answer-probability}
Let $\phi$ be an explanation graph over a Px program.  Let $\pi_s$ be
the probability distribution of switch $s$ in the program.
Then the probability of $\phi$, denoted by $\id{prob}(\phi)$ is
defined by the recurrence:
\begin{align*}
  \id{prob}(0) &= 0\\
  \id{prob}(1) &= 1\\
  \id{prob}((s,z)[v_i:\phi_i]) &= \sum_i \pi_s(v_i) \times
  \id{prob}(\phi_i)
\end{align*}
Probability can be computed in time linear in the
size of an explanation graph.  \hfill $\Box$
\end{Def}

Once the explanation graph of an answer is materialized, the
probability of an answer is determined by computing the probability of
its explanation.  
\begin{Lem}[Answer Probability]
  Let $p(t_1,t_2,\ldots,t_n)$ be an answer to a query in Px. Let $\phi$
  be the explanation graph such that $p(t_1,t_2,\ldots,t_n,\phi)$  is the
  answer to the corresponding query over the transformed program.    
  Then the probability of $p(t_1,t_2,\ldots,t_n) =  \id{prob}(\phi)$. 
  \hfill $\Box$
\end{Lem}
}

\subsection{Syntax and Semantics of Parameterized Px Programs}
Parameterized Px, called Param-Px for short, is a further extension of the
Px language.  The first feature of this extension is the specification
of \emph{populations} and \emph{instances} to specify ranges of
instance parameters of \texttt{msw}s.  

\begin{Def}[Population]\label{def:population}
  A \emph{population} is a named finite set, with a specified
  cardinality.
  A population has the following properties:
  \begin{enumerate}
  \item Elements of a population may be atomic, or depth-bounded
    ground terms.  
  \item Elements of a population are totally ordered using
    the default term order.  
  \item Distinct populations are disjoint. \hfill $\Box$
  \end{enumerate}
\end{Def}
Populations and their cardinalities are specified in a Param-Px
program by \texttt{population} facts.  
For example, the program in Figure~\ref{fig:intro-ex}(a) defines a
population named \texttt{coins} of size $100$.   The individual
elements of this set are left unspecified.   When necessary,
\texttt{element/2} facts may be used to define distinguished elements of a
population.  For example, \texttt{element(fred, persons)} defines a
distinguished element ``\texttt{fred}'' in population persons.  
In presence of \texttt{element} facts, elements of a population are
ordered as follows.  The order of \texttt{element} facts specifies the order among
the distinguished elements, and all distinguished elements 
occur before other unspecified elements in the order.

\begin{Def}[Instance]\label{def:instance}
  An \emph{instance} is an element of a population. In a Param-Px program, a
  built-in predicate \texttt{in/2} can be used to draw an instance from a
  population.  All instances of a population can be drawn by backtracking over
  \texttt{in}.  \hfill $\Box$
\end{Def}
For example, in Figure~\ref{fig:intro-ex}(a), \texttt{X in coins} binds
\texttt{X} to an instance of population \texttt{coins}.  An \emph{instance
  variable} is one that occurs as the instance argument in an \texttt{msw}
predicate in a clause of a Param-Px program.  For example, in
Figure~\ref{fig:intro-ex}(a), \texttt{X} and \texttt{Y} in the clause defining
\texttt{twoheads} are instance variables.

\myparagraph{Constraints.} The second extension in Param-Px are atomic
constraints, of the form $\{t_1 = \ t_2\}$, $\{t_1 \not= t_2\}$ and $\{t_1
< t_2\}$, where $t_1$ and $t_2$ are variables or constants, 
 to compare instances of a population.   We use braces ``$\{\cdot\}$''
 to distinguish the constraints from Prolog built-in  comparison operators.

\myparagraph{Types.}  We use populations in a Param-Px program to
confer types to program variables.
Each variable that occurs in an ``\texttt{in}''
predicate is assigned a unique type.  More specifically, $X$ has type
$p$ if \texttt{$X$ in $p$} occurs in a program, where $p$ is a
population; and $X$ is untyped otherwise.   We extend this notion of
types to constants and switches as well.  A constant $c$ has type $p$ if
there is a fact \texttt{element($c$, $p$)}; and $c$ is untyped
otherwise.   A switch $s$ has type $p$ if there is an
\texttt{msw($s$, $X$, $t$)} in the program and $X$ has type $p$; and
$s$ is untyped otherwise.   

\begin{Def}[Well-typedness and Typability]\label{def:well-typed}
  A Param-Px program is \emph{well-typed} if:
  \begin{enumerate}
  \item For every constraint in the program of the form 
    $\{t_1 = t_2\}$, $\{t_1 \not= t_2\}$ or $\{t_1 < t_2\}$, the
    types of $t_1$ and $t_2$ are identical. 
  \item Types of arguments of every atom on the r.h.s. of a clause are
    identical to the types of corresponding parameters of l.h.s. atoms of matching
    clauses.
  \item Every switch in the program has a unique type.
  \end{enumerate}
  A Param-Px program is \emph{typable} if we can add literals
  of the form \texttt{$X$ in $p$} (where $p$ is a population) to r.h.s. of
  clauses such that the resulting program is well-typed. \hfill $\Box$
\end{Def}
The first two conditions of well-typedness ensure that only instances from the
same population are compared in the program.  The last condition imposes that
instances of random variables generated by switch $s$ are all indexed by
elements drawn from the same population. In the rest of the paper, unless
otherwise specified, we assume all Param-Px programs under consideration are
well-typed.

\paragraph{Semantics of Param-Px Programs.}  Each Param-Px program can
be readily transformed into a non-parameterized ``ordinary'' Px
program.  Each \texttt{population} fact is used to generate a set of
\texttt{in/2} facts enumerating the elements of the population.  Other
constraints are replaced by their counterparts is Prolog: e.g. $\{X <
Y\}$ with $X \mathtt{<} Y$.  Finally, each \texttt{msw($s$,$i$,$t$)}
is preceded by $i\ \mathtt{in}\ p$ where $p$ is the type of $s$.  The
semantics of the original parameterized program is defined by the
semantics of the transformed program.


\section{Lifted Explanations}\label{sec:liftedexp}
In this section we formally define \emph{lifted explanation graphs}.
These are a generalization of {ground explanation graphs} defined
earlier, and are introduced in order to represent ground explanations
compactly.  As illustrated in Figure~\ref{fig:intro-ex} in
Introduction, the compactness of lifted explanations is a result of
summarizing the instance information.  Constraints over instances form
a basic building block of lifted explanations.  We use the following
constraint domain for this purpose.

\subsection{Constraints on Instances}

\begin{Def}[Instance Constraints]\label{def:interval-constraints}
  Let ${\cal V}$ be a set of instance variables, with subranges of integers
  as domains, such that $m$ is the largest positive integer in the
  domain of any variable.  Atomic constraints on instance variables are of one of the
  following two forms: $X < aY \pm k$, $X = aY \pm k$, where $X,Y \in
  {\cal V}$, $a \in  {0,1}$, where $k$ is a non-negative integer $\leq m+1$.   
  The language of constraints over bounded integer intervals, denoted
  by ${\cal L}({\cal V}, m)$, is a set of formulae $\eta$, where $\eta$ is a
  non-empty set of atomic constraints representing their conjunction.
\end{Def}
Note that each formula in ${\cal L}({\cal V}, m)$ is a convex
region in ${\mathbb{Z}}^{\lvert V \rvert}$, and hence is closed under conjunction
and existential quantification.  

Let $\id{vars}(\eta)$ be the set of instance variables in an instance
constraint $\eta$.   A substitution
$\sigma:\id{vars}(\eta) \rightarrow [1..m]$ that maps each variable to an
element in its domain is a \emph{solution} to $\eta$ if each constraint in
$\eta$ is satisfied by the mapping. The set of all solutions of $\eta$ is
denoted by $\sem{\eta}$. The constraint formula $\eta$ is unsatisfiable if
$\sem{\eta} = \emptyset$.  We say that $\eta \models \eta'$ if every
$\sigma \in \sem{\eta}$ is a solution to $\eta'$. 

Note also that instance constraints are a subclass of the well-known
integer octagonal constraints~\cite{Mine2006} and can be represented
canonically by difference bound matrices (DBMs)~\cite{Yovine98,LLPW97}, permitting efficient algorithms for
conjunction and existential quantification.  Given a constraint on
$n$ variables, a DBM is a $(n+1) \times (n+1)$ matrix with rows and columns
indexed by variables (and a special ``zero'' row and column).  For
variables $X$ and $Y$, the entry in cell $(X,Y)$ of a DBM represents
the upper bound on $X-Y$.   For variable $X$, the value at cell $(X,0)$
is $X$'s upper bound and the value at cell $(0,X)$ is the negation of
$X$'s lower bound.  

Geometrically, each entry
in the DBM representing a $\eta$ is a ``face''
of the region representing $\sem{\eta}$.   Negation of an instance
constraint $\eta$ can be represented by a set of mutually exclusive instance
constraints.  Geometrically, this can be seen as the set of convex regions
obtained by complementing the ``faces'' of the region representing
$\sem{\eta}$.   Note that when $\eta$ has $n$ variables, the number of instance constraints in $\neg
\eta$ is bounded by the number of faces of $\sem{\eta}$, and hence by $O(n^2)$.

Let $\neg \eta$ represent the set of mutually exclusive instance constraints representing
the negation of $\eta$.   Then the disjunction of two instance
constraints $\eta$ and $\eta'$ can be represented by the set of
mutually exclusive instance constraints $(\eta \land \neg \eta') \cup
(\eta' \land \neg \eta) \cup \{\eta \land \eta'\}$, where we overload
$\land$ to represent the element-wise conjunction of an
instance constraint with a set of constraints.  

An existentially quantified formula of the form $\exists X.\eta$ can
be represented by a DBM obtained by removing the rows and columns
corresponding to $X$ in the DBM representation of $\eta$.  We denote 
this simple procedure to obtain $\exists X.\eta$ from $\eta$ by $Q(X,
\eta)$.    

\begin{Def}[Range]\label{def:range}
  Given a constraint formula $\eta \in {\cal L}({\cal V},m)$, and
  $X \in \id{vars}(\eta)$, let
  $\sigma_X(\eta) = \{v \mid \sigma \in \sem{\eta}, \sigma(X) = v\}$. Then
  $\id{range}(X,\eta)$ is the interval $[l,u]$, where $l = min(\sigma_X(\eta))$
  and $u = max(\sigma_X(\eta))$.
\end{Def}

Since the constraint formulas represent convex regions, it follows
that each variable's range will be an interval.  Note that range of a
variable can be readily obtained in constant time from the entries for
that variable in the zero row and zero column of the constraint's DBM
representation.

\subsection{Lifted Explanation Graphs}
\begin{Def}[Lifted Explanation Graph]\label{def:lifted-expl-graph}
Let $S$ be the set of ground switches in a Param-Px program $P$, $D_s$ be
the domain of switch $s \in S$, $m$ be the sum of the cardinalities of
all populations in $P$ and $C$ be the set of distinguished elements
of the populations in $P$. A \emph{lifted explanation graph} over variables
${\cal V}$ is a pair $(\Omega:\eta, \psi)$ which satisfies the following conditions
\begin{enumerate}
\item $\Omega:\eta$ is the notation for $\exists \Omega.\eta$, where
  $\eta \in {\cal L}({\cal V},m)$ is either a satisfiable constraint formula,
  or the single atomic constraint $\texttt{false}$ and
  $\Omega \subseteq \id{vars}(\eta)$ is the set of quantified variables in
  $\eta$. When $\eta$ is $\texttt{false}$, $\Omega = \emptyset$.
  \item $\psi$ is a singly rooted DAG which satisfies the following conditions
    \begin{itemize}
      \item Internal nodes are labeled $(s,t)$ where $s \in S$ and $t \in {\cal
          V}\cup C$.
      \item Leaves are labeled either $0$ or $1$.
      \item Each internal node has an outgoing edge for each outcome $\in D_s$.
      \item If a node labeled $(s,t)$ has a child labeled $(s',t')$ then $\eta
        \models t < t'$ or $\eta \models t = t'$ and $(s,c) \prec (s',c)$ for
        any ground term $c$ (see Def. \ref{def:explanation-graph}).
    \end{itemize}
\end{enumerate}
\end{Def}

Similar to ground explanation graphs (Def. \ref{def:explanation-graph}), the DAG
components of the lifted explanation graphs are represented by textual patterns
$(s,t)[\alpha_i:\psi_i]$ where $(s,t)$ is the label of the root and $\psi_i$ is
the DAG associated with the edge labeled $\alpha_i$. Irrelevant parts may
denoted ``$\_$'' to reduce clutter. In the lifted explanation graph shown in
Figure~\ref{fig:intro-ex}(c), the $\Omega:\eta$ part would be $\{X,Y\}:X<Y$. We,
now define the standard notion of bound and free variables over lifted
explanation graphs.
\begin{Def}[Bound and free variables]\label{def:bound-free}
  Given a lifted explanation graph $(\Omega:\eta, \psi)$, a variable
  $X \in \id{vars}(\eta)$, is called a bound variable if $X \in \Omega$,
  otherwise its called a free variable.
\end{Def}
The lifted explanation graph is said to be \emph{well-structured} if every pair
of nodes $(s,X)$ and $(s',X)$ with the same bound variable $X$, have a common
ancestor with $X$ as the instance variable. In the rest of the paper, we assume
that the lifted explanation graphs are well-structured.

\begin{Def}[Substitution operation]\label{def:substitution}
  Given a lifted explanation graph $(\Omega:\eta,\psi)$, a variable
  $X \in \id{vars}(\eta)$, the substitution of $X$ in the lifted explanation
  graph with a value $k$ from its domain, denoted by $(\Omega:\eta,\psi)[k/X]$
  is defined as follows:
\begin{align*}
  (\Omega:\eta, \psi)[k/X] &= (\emptyset:\{\texttt{false}\}, 0), \mbox{ if }
                             \eta[k/X] \mbox{ is unsatisfiable}\\
  (\Omega:\eta, \psi)[k/X] &= (\Omega\setminus\{X\}:\eta[k/X], \psi[k/X]),
                             \mbox{ if } \eta[k/X] \mbox{ is satisfiable}\\
  ((s,t)[\alpha_i:\psi_i])[k/X] &= (s,k)[\alpha_i:\psi_i[k/X]], \mbox{ if } t =
                                  X\\
  ((s,t)[\alpha_i:\psi_i])[k/X] &= (s,t)[\alpha_i:\psi_i[k/X]], \mbox{ if } t
                                  \neq X\\
  0[k/X] &= 0\\
  1[k/X] &= 1
\end{align*}
\end{Def}

In the above definition, $\eta[k/X]$ refers to the standard notion of
substitution. The definition of substitution operation can be generalized to
mappings on sets of variables. Let $\sigma$ be a substitution that maps
variables to their values.  By $(\Omega:\eta,\psi)\sigma$ we denote the lifted
explanation graph obtained by sequentially performing substitution operation on
each variable $X$ in the domain of $\sigma$.

\begin{Lem}[Substitution lemma]\label{lem:substitution}
  If $(\Omega:\eta,\psi)$ is a lifted explanation graph, and
  $X \in \id{vars}(\eta)$, then $(\Omega:\eta,\psi)[k/X]$ where $k$ is a value in
  domain of $X$, is a lifted explanation graph.
\end{Lem}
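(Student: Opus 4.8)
The proof is a straightforward structural verification: we must check that $(\Omega:\eta,\psi)[k/X]$, as defined by Definition~\ref{def:substitution}, satisfies every clause of Definition~\ref{def:lifted-expl-graph}. I would split into the two cases given by the definition of the substitution operation.

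\textbf{Case 1: $\eta[k/X]$ is unsatisfiable.} Here the result is $(\emptyset:\{\texttt{false}\}, 0)$. This is trivially a lifted explanation graph: the first condition permits $\eta$ to be the single atomic constraint $\texttt{false}$ with $\Omega = \emptyset$, and the DAG $0$ is a single leaf labeled $0$, which vacuously meets the conditions on $\psi$. So nothing needs to be done beyond citing the definition.

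\textbf{Case 2: $\eta[k/X]$ is satisfiable.} The result is $(\Omega\setminus\{X\}:\eta[k/X],\ \psi[k/X])$. I would verify the conditions in order. For condition~1: $\eta[k/X]$ is satisfiable by assumption, and it lies in ${\cal L}({\cal V}\setminus\{X\},m)$ — substituting a constant for $X$ in each atomic constraint either removes $X$ from it or makes it a constraint between the remaining variable and an adjusted integer offset, and in either case we stay within the allowed atomic-constraint forms (one should note $k \le m$ so the adjusted offsets stay within bound, appealing to the DBM representation if a cleaner argument is wanted). Also $\Omega\setminus\{X\} \subseteq \id{vars}(\eta[k/X])$ since removing $X$ from both sides preserves containment. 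For condition~2, I would argue by structural induction on $\psi$, following the last four clauses of Definition~\ref{def:substitution}: leaves $0,1$ are unchanged and remain valid leaves; an internal node $(s,t)$ becomes $(s,k)$ if $t=X$ (and $k \in C$ is a legal label since $k$ is a value in the domain — here one may need the convention that substituted values are drawn from the distinguished-element set, or simply that the label set $\mathcal{V}\cup C$ is closed under the substitution) or stays $(s,t)$ otherwise; the out-degree and edge labels are untouched; and the DAG structure (sharing) is preserved since substitution acts uniformly. The one clause that needs real care is the ordering condition: if $(s,t)$ has child $(s',t')$ in $\psi$, then $\eta \models t<t'$ or ($\eta\models t=t'$ and $(s,c)\prec(s',c)$), and I must show the analogous entailment holds for the images under $[k/X]$ with respect to $\eta[k/X]$. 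This follows because $\eta \models t < t'$ implies $\eta[k/X] \models (t)[k/X] < (t')[k/X]$ (entailment is preserved under substituting a constant into both the constraint store and the entailed atomic formula), and likewise for equality; the $(s,c)\prec(s',c)$ part is a purely syntactic condition on switches unaffected by the substitution.

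\textbf{The main obstacle.} The genuinely delicate point is the ordering/entailment clause in Case~2 — confirming that $\eta \models t \bowtie t'$ survives the substitution to $\eta[k/X] \models t[k/X] \bowtie t'[k/X]$, including the degenerate subcases where $t$ or $t'$ equals $X$ (so the entailed atomic constraint becomes a statement about a constant and a variable, or two constants, which must then be an entailment of the reduced store). A secondary nuisance is making precise that $\eta[k/X]$ really lands in ${\cal L}({\cal V},m)$ — that the octagonal/DBM form is closed under setting a variable to a constant $k \le m$ — which is where I would lean on the earlier remark that these are octagonal constraints representable by DBMs, since constant-substitution on a DBM is elementary. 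Everything else is routine bookkeeping over the cases of Definition~\ref{def:substitution}.
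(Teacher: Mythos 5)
Your proposal is correct and follows essentially the same route as the paper, which likewise splits on whether $\eta[k/X]$ is satisfiable and then observes that the resulting pair meets the conditions of Definition~\ref{def:lifted-expl-graph} (the paper's own proof is only an informal ``clearly satisfies'' argument). Your extra care about the ordering entailment surviving the substitution, the closure of the constraint language under constant substitution (via the DBM view), and the node-label set after replacing $X$ by $k$ only fills in details the paper leaves implicit.
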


When a substitution $[k/X]$ is applied to a lifted explanation graph, and
$\eta[k/X]$ is unsatisfiable, the result is $(\emptyset:\{\texttt{false}\}, 0)$
which is clearly a lifted explanation graph. When $\eta[k/X]$ is satisfiable,
the variable is removed from $\Omega$ and occurrences of $X$ in $\psi$ are
replaced by $k$. The resultant DAG clearly satisfies the conditions imposed by
the Def \ref{def:lifted-expl-graph}. Finally we note that a ground explanation
graph $\phi$ (Def. \ref{def:explanation-graph}) is a trivial lifted explanation
graph $(\emptyset:\{\texttt{true}\}, \phi)$. This constitutes the informal proof
of lemma \ref{lem:substitution}.

\subsection{Semantics of Lifted Explanation Graphs}
The meaning of a lifted explanation graph $(\Omega:\eta,\psi)$ is given by the
ground explanation tree represented by it.
\begin{Def}[Grounding]\label{def:grounding}
  Let $(\Omega:\eta,\psi)$ be a closed lifted explanation graph, i.e., it has no
  free variables. Then the ground explanation tree represented by
  $(\Omega:\eta,\psi)$, denoted $\id{Gr}((\Omega:\eta,\psi))$, is given by the
  function $Gr(\Omega,\eta,\psi)$.  When $\sem{\eta}=\emptyset$, then
  $Gr(\_,\eta,\_)=0$. We consider the cases when $\sem{\eta} \neq
  \emptyset$. The grounding of leaves is defined as $Gr(\_,\_,0)=0$ and
  $Gr(\_,\_,1)=1$. When the instance argument of the root is a constant,
  grounding is defined as
  $Gr(\Omega,\eta,(s,t)[\alpha_i:\psi_i]) =
  (s,t)[\alpha_i:Gr(\Omega,\eta,\psi_i)]$. When the instance argument is a bound
  variable, the grounding is defined as
  $\id{Gr}(\Omega,\eta,(s,t)[\alpha_i:\psi_i]) \equiv \bigvee_{c \in
    range(t,\eta)}(s,c)[\alpha_i:\id{Gr}(\Omega\setminus\{t\},\eta[c/t],\psi_i[c/t])]$.
\end{Def}

In the above definition $\psi[c/t]$ represents the tree obtained by replacing
every occurrence of $t$ in the tree with $c$. The disjunct
$(s,c)[\alpha_i:\id{Gr}(\Omega\setminus\{t\},\eta[c/t],\psi_i[c/t])]$ in the
above definition is denoted $\phi_{(s,c)}$ when the lifted explanation graph is
clear from the context. The grounding of the lifted explanation graph in Figure
\ref{fig:intro-ex}(c) is shown in Figure \ref{fig:grounding} when there are
three coins. Note that in the figure the disjuncts corresponding to the
grounding of the left subtree of the lifted explanation graph have been combined
using the $\lor$ operation. In a similar way the two disjuncts corresponding to
the root would also be combined. Further note that the third disjunct corresponding
to grounding $X$ with $3$ is ommitted because, it has all $0$ children and would 
therefore get collapsed into a single $0$ node.
\begin{figure}
  \centering
  \begin{adjustbox}{scale=0.75}
    \begin{tikzpicture}[shorten >=1pt,%
          inner sep = 0.5mm, %
          node distance=2.25cm,%
          every state/.style={circle}, 
          on grid,auto,%
          initial text=,%
          ellipsis/.style={dotted},%
          bend angle=30]

          \node[state,rectangle] (s11) {$(toss,1)$};
          \node[state,rectangle] (s21) [below left=of s11] {$(toss,2)$};
          \node[state] (s22) [below right=of s11] {$0$};
          \node[state] (s31) [below left=of s21] {$1$};
          \node[state,rectangle] (s32) [below right=of s21] {$(toss,3)$};
          \node[state] (s41) [below left=of s32] {$1$};
          \node[state] (s42) [below right=of s32] {$0$};

          \node (s23) [right=of s22] {$\bigvee$};

          \node[state,rectangle] (s24) [right=of s23] {$(toss,3)$};
          \node[state,rectangle] (s12) [above right=of s24] {$(toss,2)$};
          \node[state] (s25) [below right=of s12] {$0$};
          \node[state] (s33) [below left=of s24] {$1$};
          \node[state] (s34) [below right=of s24] {$0$};

          \path[->]
          (s11) edge [swap] node {h} (s21)
          (s11) edge node {t} (s22)
          (s21) edge [swap] node {h} (s31)
          (s21) edge node {t} (s32)
          (s32) edge [swap] node {h} (s41)
          (s32) edge node {t} (s42)
          (s12) edge [swap] node {h} (s24)
          (s12) edge node {t} (s25)
          (s24) edge [swap] node {h} (s33)
          (s24) edge node {t} (s34);
    \end{tikzpicture}
  \end{adjustbox}
\caption{Lifted expl. graph grounding example}
\label{fig:grounding}
\end{figure}
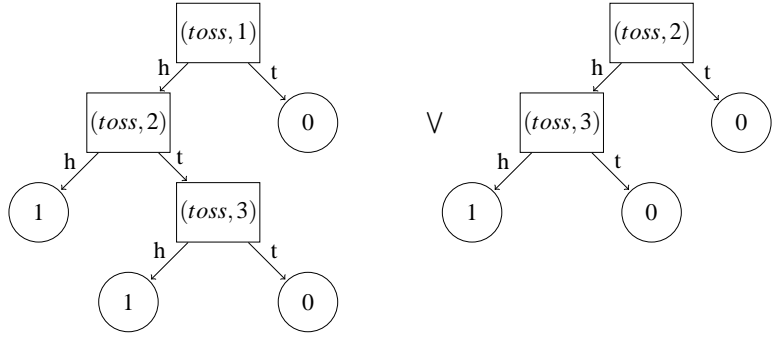

\subsection{Operations on Lifted Explanation Graphs}
\myparagraph{And/Or Operations.}  
Let $(\Omega:\eta,\psi)$ and $(\Omega':\eta',\psi')$ be two lifted explanation
graphs. We now define ``$\land$" and ``$\lor$'' operations on them. The
``$\land$" and ``$\lor$'' operations are carried out in two steps. First, the
constraint formulas of the inputs are combined.  The key issue in defining these
operations is to ensure the right order among the graph nodes (see criterion~3
of Def.~\ref{def:lifted-expl-graph}).  However, the free variables in the
operands may have \emph{no known order} among them. Since, an arbitrary order
cannot be imposed, the operations are defined in a \emph{relational}, rather
than functional form. We use the notation
$(\Omega:\eta,\psi) \oplus (\Omega':\eta',\psi') \rightarrow
(\Omega'':\eta'',\psi'')$ to denote that $(\Omega'':\eta'',\psi'')$ is \emph{a}
result of $(\Omega:\eta,\psi) \oplus (\Omega':\eta',\psi')$. When an operation
returns multiple answers due to ambiguity on the order of free variables, the
answers that are inconsistent with the final order are discarded.  We assume
that the variables in the two lifted explanation graphs are standardized apart
such that the bound variables of $(\Omega:\eta,\psi)$ and
$(\Omega':\eta', \psi')$ are all distinct, and different from free variables of
$(\Omega:\eta, \psi)$ and $(\Omega':\eta',\psi')$. Let
$\psi=(s,t)[\alpha_i:\psi_i]$ and $\psi'=(s',t')[\alpha'_i:\psi'_i]$.

\myparagraph{Combining constraint formulae}
\begin{description}
\item [$Q(\Omega,\eta) \land Q(\Omega',\eta')$ is unsatisfiable.] Then the orders
among free variables in $\eta$ and $\eta'$ are incompatible.
\begin{itemize}
\item The $\land$ operation is defined as
  $ (\Omega:\eta,\psi) \land (\Omega':\eta',\psi') \rightarrow
  (\emptyset:\{\texttt{false}\},0) $
\item The $\lor$ operation simply returns the two inputs as outputs:
\begin{align*} 
(\Omega:\eta,\psi) \lor (\Omega':\eta',\psi') \rightarrow & (\Omega:\eta,\psi)\\
(\Omega:\eta,\psi) \lor (\Omega':\eta',\psi') \rightarrow & (\Omega':\eta',\psi')
\end{align*}
\end{itemize}

\item[$Q(\Omega,\eta) \land Q(\Omega',\eta')$ is satisfiable.] The orders among free
variables in $\eta$ and $\eta'$ are compatible
\begin{itemize}
\item The $\land$ operation is defined as follows
$
(\Omega:\eta,\psi) \land (\Omega':\eta',\psi') \rightarrow
(\Omega\cup\Omega':\eta \land \eta', \psi \land \psi')
$
\item The $\lor$ operation is defined as
\begin{align*}
(\Omega:\eta,\psi) \lor (\Omega':\eta',\psi') \rightarrow & (\Omega \cup \Omega': \eta \land \neg \eta', \psi)\\
(\Omega:\eta,\psi) \lor (\Omega':\eta',\psi') \rightarrow & (\Omega \cup \Omega': \eta' \land \neg \eta, \psi')\\
(\Omega:\eta,\psi) \lor (\Omega':\eta',\psi') \rightarrow & (\Omega \cup \Omega': \eta \land \eta', \psi \lor \psi')
\end{align*}
\end{itemize}
\end{description}

\myparagraph{Combining DAGs}
Now we describe $\land$ and $\lor$ operations on the two DAGs $\psi$ and $\psi'$
in the presence of a single constraint formula. The general form of the
operation is $(\Omega:\eta, \psi \oplus \psi')$.
\begin{description}
\item [Base cases: ] The base cases are as follows (symmetric base cases are defined
  analogously). 
\begin{align*}
  (\Omega:\eta, 0 \lor \psi') \rightarrow & (\Omega:\eta, \psi')\\
  (\Omega:\eta, 1 \lor \psi') \rightarrow & (\Omega:\eta, 1)\\
  (\Omega:\eta, 0 \land \psi') \rightarrow & (\Omega:\eta, 0)\\
  (\Omega:\eta, 1 \land \psi') \rightarrow & (\Omega:\eta, \psi')
\end{align*}
\item [Recursion: ] When the base cases do not apply, we try to compare the
  roots of $\psi$ and $\psi'$. The root nodes are compared as follows: We say
  $(s,t) = (s',t')$ if $\eta \models t = t'$ and $s=s'$, else $(s,t) < (s',t')$
  (analogously $(s',t') < (s,t)$) if $\eta \models t < t'$ or
  $\eta \models t=t'$ and $(s,c) \prec (s',c)$ for any ground term $c$. If
  neither of these two relations hold, then the roots are not comparable and its
  denoted as $(s,t) \not \sim (s',t')$.
\begin{description}
  \item [a. ]$(s,t) < (s',t')$
   \[
     (\Omega:\eta, \psi \oplus \psi') \rightarrow (\Omega:\eta, (s,t)[\alpha_i:
     \psi_i \oplus \psi'])
   \]
 \item [b. ]$(s',t') < (s,t)$
   \[
     (\Omega:\eta, \psi \oplus \psi') \rightarrow (\Omega:\eta, (s',t')[\alpha'_i:
     \psi \oplus \psi'_i])
   \]
 \item [c. ]$(s,t) = (s',t')$
   \[
     (\Omega:\eta, \psi \oplus \psi') \rightarrow (\Omega:\eta, (s,t)[\alpha_i:
     \psi_i \oplus \psi'_i])
   \]
 \item [d. ]$(s,t) \not \sim (s',t')$
   \begin{description}
   \item [i. ] $t$ is a free variable or a constant, and $t'$ is a free
     variable (the symmetric case is analogous).
       \begin{align*}
         (\Omega:\eta, \psi \oplus \psi') \rightarrow & (\Omega:\eta\land t<t',
                                                        \psi \oplus \psi')\\
         (\Omega:\eta, \psi \oplus \psi') \rightarrow & (\Omega:\eta\land t=t',
                                                        \psi \oplus \psi')\\
         (\Omega:\eta, \psi \oplus \psi') \rightarrow & (\Omega:\eta\land t'<t,
                                                        \psi \oplus \psi')
       \end{align*}
     \item [ii. ] $t$ is a free variable or a constant and $t'$ is a bound
       variable (the symmetric case is analogous)
       \begin{align*}
         (\Omega:\eta, \psi \oplus \psi') \rightarrow & & (\Omega:\eta\land t < t',
         \psi \oplus \psi')\\
         & \vee & (\Omega:\eta\land t=t',\psi\oplus\psi')\\
         & \vee & (\Omega:\eta\land t' < t, \psi \oplus \psi')
       \end{align*}
       Note that in the above definition, all three lifted explanation graphs
       use the same variable names for bound variable $t'$. Lifted explanation
       graphs can be easily standardized apart on the fly, and henceforth we
       assume that the operation is applied as and when required.
     \item [iii. ] $t$ and $t'$ are bound variables. Let
       $range(t,\eta) = [l_1,u_1]$ and $range(t',\eta) = [l_2,u_2]$.  We can
       conclude that $range(t,\eta)$ and $range(t',\eta)$ are overlapping,
       otherwise $(s,t)$ and $(s',t')$ could have been ordered. Without loss of
       generality, we assume that $l_1 \leq l_2$ and we consider various cases
       of overlap as follows:

       When $l_1=l_2$ and $u_1=u_2$
       \begin{align*}
         (\Omega:\eta, \psi \oplus \psi') & \rightarrow (\Omega\cup\{t''\}:\eta \land l_1-1<t'' \land t''-1 < u_1 \land t'' < t \land t'' < t',
         \\
         & (s,t'')[ \alpha_i: \\
         & (\psi_i[t''/t] \oplus \psi'_i[t''/t']) \lor \\
         & (\psi_i[t''/t] \oplus \psi') \lor \\
         & (\psi'_i[t''/t'] \oplus \psi)])
       \end{align*}
       When $\oplus$ is $\lor$ the result can be simplified as
       \[
         (\Omega:\eta, \psi \oplus \psi') \rightarrow (\Omega\cup\{t''\}\setminus\{t,t'\}:\eta[t''/t,t''/t'], (s,t'')[\alpha_i: \psi_i[t''/t] \lor \psi'_i[t''/t']])
       \]
       When $l_1=l_2$ and $u_1 < u_2$ the result is
       \[
         (\Omega:\eta \land t'-1
                                                        < u_1,
                                                        \psi \oplus \psi')
                                                      \vee (\Omega:\eta \land u_1 <
                                                        t', \psi \oplus
                                                        \psi')
       \]
       When $l_1=l_2$ and $u_2 < u_1$ the result is
       \[
       (\Omega:\eta \land t=t',
                                                        \psi \oplus \psi')
                                                      \vee (\Omega:\eta \land u_2 <
                                                        t, \psi \oplus
                                                        \psi')
       \]
       When $l_1 < l_2$ and $u_1 = u_2$ the result is
       \[
         (\Omega:\eta \land t=t',
                                                        \psi \oplus \psi')
                                                      \vee (\Omega:\eta \land t <
                                                        l_2, \psi \oplus
                                                        \psi')
       \]
       When $l_1 < l_2$ and $u_1 < u_2$ the result is
       \[
         (\Omega:\eta \land u_1 < t',
                                                          \psi \oplus \psi')
                                                      \vee (\Omega:\eta \land t <
                                                        l_2 \land t'-1 < u_1, \psi \oplus
                                                        \psi')
                                                      \vee (\Omega:\eta \land
                                                        t=t', \psi \oplus \psi')
       \]
       When $l_1 < l_2$ and $u_2 < u_1$ the result is
       \[
         (\Omega:\eta \land u_2 < t,
                                                          \psi \oplus \psi')
                                                      \vee (\Omega:\eta \land t <
                                                        l_2, \psi \oplus
                                                        \psi')
                                                      \vee (\Omega:\eta \land
                                                        t=t', \psi \oplus \psi')
       \]
   \end{description}
\end{description}
\end{description}

\begin{restatable}[Correctness of ``$\land$'' and ``$\lor$'' operations]{Lem}{andorcorrect}
\label{lem:andor-correctness}
  Let $(\Omega:\eta,\psi)$ and $(\Omega':\eta',\psi')$ be two lifted explanation
  graphs with free variables $\{X_1, X_2\ldots,X_n\}$.  Let $\Sigma$ be the set
  of all substitutions mapping each $X_i$ to a value in its domain. Then, for every 
  $\sigma \in \Sigma$, and $\oplus \in \{\land, \lor\}$
\[
\id{Gr}(((\Omega:\eta,\psi) \oplus (\Omega':\eta',\psi'))\sigma) =
\id{Gr}((\Omega:\eta,\psi)\sigma) \oplus \id{Gr}((\Omega':\eta',\psi')\sigma)
\]
\end{restatable}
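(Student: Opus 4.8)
The plan is to fix a substitution $\sigma \in \Sigma$ and argue by induction, following the two stages of the operation: first the combination of the constraint formulae, then the combination of the DAGs under a common constraint. Since $\sigma$ assigns a concrete value to every free variable, the results of a single $\oplus$ have, under $\sigma$, pairwise incompatible constraints, and exactly one of them is consistent with $\sigma$: when $Q(\Omega,\eta)\land Q(\Omega',\eta')$ is unsatisfiable the incompatibility of the free-variable orders guarantees $\sigma$ satisfies at most one of $\exists\Omega.\eta$, $\exists\Omega'.\eta'$; when it is satisfiable the results carry the mutually exclusive constraints $\eta\land\neg\eta'$, $\eta'\land\neg\eta$, $\eta\land\eta'$. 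For a discarded result the constraint is unsatisfiable under $\sigma$, hence its grounding is $0$ by Definitions~\ref{def:substitution} and~\ref{def:grounding}, so the displayed identity reduces to the claim for the surviving result. In the satisfiable $\oplus=\land$ case the surviving result carries the single constraint $\eta\land\eta'$; here I would use that the two inputs are standardized apart, so after applying $\sigma$ the formulae $\eta\sigma$ and $\eta'\sigma$ constrain the disjoint variable sets $\Omega$ and $\Omega'$, whence $\id{range}(t,(\eta\land\eta')\sigma)=\id{range}(t,\eta\sigma)$ for $t\in\Omega$ and symmetrically, so $\id{Gr}$ of either input is unaffected by replacing its constraint with $\eta\land\eta'$. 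This reduces everything to proving, for a single constraint $\eta$ (which the DAG recursion may further refine on the free variables), that $\id{Gr}((\Omega:\eta,\psi\oplus\psi')\sigma)=\id{Gr}((\Omega:\eta,\psi)\sigma)\oplus\id{Gr}((\Omega:\eta,\psi')\sigma)$, where the right-hand $\oplus$ is the ``and''/``or'' operation on ground explanation graphs.

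Stage two is a structural induction on the combined size of $\psi$ and $\psi'$, with the constraint-refinement cases d.i--d.iii handled by a subsidiary argument since each of them either makes the roots comparable or shrinks a range, after which one returns to cases a--c. The base cases (a leaf on one side) follow directly from the base cases of the DAG definition together with $\id{Gr}(0)=0$ and $\id{Gr}(1)=1$. For cases a, b, c I would first record that the node order $\prec$ of Definition~\ref{def:explanation-graph} is consistent with the lifted order: if $\eta\models t<t'$ then, because $\sigma$ fixes $t$ when $t$ is free and a bound $t'$ ranges over an interval whose least element is attained by some solution, every value taken by $t$ is $\prec$ every value taken by $t'$; hence in case a the root $(s,t\sigma)$ is $\prec$-minimal among all nodes of $\id{Gr}((\Omega:\eta,\psi')\sigma)$, the standard ``apply'' identity $(s,t\sigma)[\alpha_i:\phi_i]\oplus\phi'=(s,t\sigma)[\alpha_i:\phi_i\oplus\phi']$ holds (in the bound-variable sub-case this uses idempotence of $\lor$, resp.\ distributivity of $\land$, to push $\oplus$ through the chain produced by grounding), and the inductive hypothesis finishes the case; b and c are analogous, case c additionally using $t\sigma=t'\sigma$ together with the switch order.

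For the free-variable branching of cases d.i and d.ii the key observation is again that $\sigma$, being a total assignment, satisfies exactly one of $\eta\land t<t'$, $\eta\land t=t'$, $\eta\land t'<t$; after discarding the other two branches the roots have become comparable and we recurse via a--c. The well-structuredness assumption is used throughout to ensure that a bound variable occurring in one input does not occur in the other, so that substituting a value for such a variable leaves the grounding of the other input untouched.

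The main obstacle is case d.iii, where both roots are bound variables $t,t'$ with overlapping ranges and a fresh variable $t''$ (morally $\min(t,t')$) is introduced together with a case split on the shape of the overlap. To verify it I would expand the right-hand side by grounding: $\id{Gr}((\Omega:\eta,\psi)\sigma)$ is the $\lor$-combination of the nodes $(s,c)$ for $c\in\id{range}(t,\eta\sigma)$ and $\id{Gr}((\Omega:\eta,\psi')\sigma)$ the analogous combination over $\id{range}(t',\eta\sigma)$; combining these with $\oplus$ and pushing $\oplus$ through the disjunctions (by idempotence, commutativity, associativity, and, for $\oplus=\land$, distributivity of the ground operations) yields a disjunction indexed by pairs $(c,c')$, which I would regroup according to whether $c<c'$, $c=c'$, or $c'<c$ --- precisely the three recursive calls $\psi_i[t''/t]\oplus\psi'$, $\psi_i[t''/t]\oplus\psi'_i[t''/t']$, $\psi'_i[t''/t']\oplus\psi$ in the definition, now governed by the constraint that pins $t''$ to the smaller index. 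Confirming that this regrouping coincides, disjunct by disjunct, with the result stated for each overlap sub-case ($l_1=l_2,u_1=u_2$, including the $\lor$-simplification; $l_1=l_2,u_1<u_2$; $l_1<l_2,u_1=u_2$; the two ``crossing'' sub-cases; etc.), with the inductive hypothesis discharging each recursive call since $\psi_i$ and $\psi'_i$ are proper subgraphs, is the combinatorial heart of the proof and the step I expect to be the most delicate.
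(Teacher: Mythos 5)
Your proposal follows essentially the same route as the paper's proof: a structural induction whose case analysis mirrors the definition of the operations, discarding the results whose constraints become unsatisfiable under $\sigma$, using standardization-apart to argue $\id{range}(t,(\eta\land\eta')\sigma)=\id{range}(t,\eta\sigma)$, and handling the bound--bound case d.iii by expanding both groundings as disjunctions over the ranges and regrouping the disjuncts by the relative order of the grounded indices to match the $t''$ construction. The paper carries out exactly this plan (its Cases 2.1.1.7 and 2.2.2.7 are the regrouping argument you identify as the delicate heart), so the proposal is correct in approach and differs only in being a sketch of details the paper writes out.
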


\myparagraph{Quantification.} 

\begin{Def}[Quantification]
Operation $\id{quantify}((\Omega:\eta,\psi),X)$
changes a free variable $X \in \id{vars}(\eta)$ to a quantified variable. It is 
defined as
\[
\id{quantify}((\Omega:\eta,\psi),X) = (\Omega\cup\{X\}:\eta,\psi),\mbox{ if }X
\in \id{vars}(\eta)
\]
\end{Def}

\begin{restatable}[Correctness of $\id{quantify}$]{Lem}{quantifycorrect}
\label{lem:quantify-correctness}
  Let $(\Omega:\eta,\psi)$ be a lifted explanation graph, let $\sigma_{-X}$ be a
  substitution mapping all the free variables in $(\Omega:\eta,\psi)$ except $X$
  to values in their domains. Let $\Sigma$ be the set of mappings $\sigma$ such
  that $\sigma$ maps all free variables to values in their domains and is
  identical to $\sigma_{-X}$ at all variables except $X$. Then the following holds
\[
  \id{Gr}(\id{quantify}((\Omega:\eta,\psi), X)\sigma_{-X}) = \bigvee_{\sigma \in \Sigma}
  \id{Gr}((\Omega:\eta,\psi)\sigma)
\]
\end{restatable}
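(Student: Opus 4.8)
The plan is to peel the bound variable $X$ off by one level: I will reduce the claim to a single ``unfolding'' identity for the grounding of a bound instance variable, and prove that identity by induction on the DAG. The degenerate case comes first. If $\eta\sigma_{-X}$ is unsatisfiable, then $\id{quantify}((\Omega:\eta,\psi),X)\sigma_{-X}=(\emptyset:\{\texttt{false}\},0)$, whose grounding is $0$; and for every $\sigma\in\Sigma$ the constraint $\eta\sigma$ is an extension of $\eta\sigma_{-X}$, hence also unsatisfiable, so each $\id{Gr}((\Omega:\eta,\psi)\sigma)=0$ and the right-hand disjunction collapses to $0$ as well. So assume $\mu:=\eta\sigma_{-X}$ is satisfiable and put $\chi:=\psi\sigma_{-X}$.

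Next I record two routine facts. Since $\id{quantify}$ only adds $X$ to $\Omega$, $X\notin\id{dom}(\sigma_{-X})$, and substituting a free variable never removes anything from $\Omega$ (Definition~\ref{def:substitution}), the substitution operation commutes with $\id{quantify}$: $\id{quantify}((\Omega:\eta,\psi),X)\sigma_{-X}=(\Omega\cup\{X\}:\mu,\chi)$, now a lifted explanation graph in which $X$ is bound (Lemma~\ref{lem:substitution}). Similarly, for $\sigma=\sigma_{-X}[X\mapsto c]$ we have $(\Omega:\eta,\psi)\sigma=(\Omega:\mu,\chi)[c/X]$, which equals $(\emptyset:\{\texttt{false}\},0)$ precisely when $\mu[c/X]$ is unsatisfiable; and because $\sem{\mu}$ is convex its projection onto $X$ has no gaps, so $\mu[c/X]$ is satisfiable iff $c\in\id{range}(X,\mu)$. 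Consequently the right-hand side of the Lemma equals $\bigvee_{c\in\id{range}(X,\mu)}\id{Gr}((\Omega:\mu,\chi)[c/X])$, and it suffices to establish, for every well-structured lifted explanation graph $(\Gamma\cup\{X\}:\mu,\chi)$ with $\mu$ satisfiable and $X$ bound,
\[
  \id{Gr}(\Gamma\cup\{X\}:\mu,\chi)\ \equiv\ \bigvee_{c\in\id{range}(X,\mu)}\id{Gr}\bigl((\Gamma:\mu,\chi)[c/X]\bigr). \qquad(\star)
\]

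I would prove $(\star)$ by structural induction on $\chi$. If $\chi\in\{0,1\}$ the claim is immediate, using that $\id{range}(X,\mu)\neq\emptyset$. Otherwise the root is labelled $(s,t)$ and, since $\sigma_{-X}$ has ground away every free variable other than $X$, the label $t$ is either $X$, or a constant, or a bound variable $Y\neq X$. If $t=X$, then $(\star)$ is exactly the clause of Definition~\ref{def:grounding} for a bound instance argument, combined with the clause for a constant instance argument applied to each disjunct $(s,c)[\alpha_i:\chi_i[c/X]]$. If $t$ is a constant, $\id{Gr}$ passes over the root, the induction hypothesis unfolds $X$ in each child $\chi_i$, and the common root $(s,t)$ is then factored back out of the disjunction --- legitimate because $\id{Gr}$ produces ground explanation trees read modulo the equivalence ``$\equiv$'', under which a shared root distributes over $\lor$. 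If $t=Y$ is a bound variable with $Y\neq X$, then $\id{Gr}$ unfolds $Y$ over $\id{range}(Y,\mu)$ first and the induction hypothesis then unfolds $X$ in each $\chi_i[d/Y]$ over $\id{range}(X,\mu[d/Y])$, whereas on the right of $(\star)$ the two unfoldings occur in the opposite order; the two nested disjunctions coincide because the DAG and constraint substitutions commute ($\mu[d/Y][c/X]=\mu[c/X][d/Y]$, and likewise for $\chi_i$) and both index sets equal $\{(c,d):\mu[c/X][d/Y]\text{ satisfiable}\}$. Well-structuredness is used here to justify applying the induction hypothesis to the children: the defining $X$-ancestor of $\chi$, if $X$ occurs in $\chi$ at all, is not the $Y$-labelled root, so it sits inside (is reachable from) a single child subtree, and each $\chi_i$ together with its restricted constraint is again well-structured with respect to $X$.

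The main obstacle is this last case: commuting the unfolding of $X$ past the unfoldings of the bound variables that lie above its defining ancestor, and verifying that the ranges --- which are taken with respect to successively more constrained formulas --- enumerate exactly the grounding disjuncts produced on the other side. The bookkeeping is controlled by the ``index set $=$ satisfiable pairs'' characterization and by working throughout modulo ``$\equiv$'', so that commutativity, associativity and idempotence of $\lor$ and the factoring out of common root nodes are available without further comment; the remaining verifications (commutation of $\id{quantify}$ with substitution, and the leaf and constant cases) drop straight out of Definitions~\ref{def:substitution} and~\ref{def:grounding} and Lemma~\ref{lem:substitution}.
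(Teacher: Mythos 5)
Your argument is correct, and at its core it rests on the same two pillars as the paper's proof: the root-case identity obtained directly from Definitions~\ref{def:substitution} and~\ref{def:grounding} (with convexity of the constraint region guaranteeing that the satisfiable values of $X$ form exactly $\id{range}(X,\eta\sigma_{-X})$, the paper's ``continuity of range''), and well-structuredness to deal with the case where $X$ does not label the root. Where you differ is in how that second case is discharged. The paper argues coarsely: by well-structuredness there is a single subtree $\psi'$ whose root carries $X$ and which contains all occurrences of $X$; outside $\psi'$ the disjuncts on the right-hand side are identical, so the disjunction may be pushed down to $\psi'$ and the root case applied there. You instead isolate an explicit unfolding identity $(\star)$ and prove it by structural induction on the DAG, treating separately a constant-labelled root (factor the common root out of the disjunction, modulo the tree equivalence) and a root labelled by another bound variable $Y\neq X$ (commute the two unfoldings, with the ``index set equals satisfiable pairs'' bookkeeping). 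This buys rigor exactly where the paper is most informal: when bound variables sit above $X$'s defining ancestor, the constraints governing $\id{range}(X,\cdot)$ inside $\psi'$ depend on the values already substituted for those outer variables, and the paper's claim that the surrounding context is ``identical across disjuncts'' silently relies on the commutation of substitutions and unfoldings that you spell out. The cost is length; the paper's excision argument is shorter and, once one accepts the context-identity claim, reduces immediately to the root case. Both proofs otherwise use the same ingredients (Lemma~\ref{lem:substitution}, the degenerate unsatisfiable case, and the fact that quantification commutes with substitution of the remaining free variables), so I would classify yours as a more carefully executed variant of the paper's route rather than a conceptually different one.
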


\myparagraph{Construction of Lifted Explanation Graphs}
Lifted explanation graphs for a query are constructed by transforming the 
Param-Px program ${\cal P}$ into one that explicitly 
constructs a lifted explanation graph, following a similar procedure to
the one outlined in Section~\ref{sec:parampx} for constructing ground
explanation graphs.  The main difference is the use of existential
quantification.  Let $A \when G$ be a program clause, and
$\id{vars}(G)  - \id{vars}(A)$ be the set of variables in $G$ and not
in $A$.  If any of these variables has a type, then it means that the
variable used as an instance argument in $G$ is existentially
quantified.  Such clauses are then translated as $\id{head}(A, E_h)
\when \id{exp}(G, E_g), \id{quantify}(E_g, V_s, E_h)$, where $V_s$ is
the set of typed variables in $\id{vars}(G)  - \id{vars}(A)$.   
A minor difference is the treatment of constraints:  \id{exp} is
extended to atomic constraints $\varphi$ such that $\id{exp}(\varphi, E)$
binds $E$ to $(\emptyset:\{\varphi\}, 1)$.

We order the populations and map the elements of the populations to natural
numbers as follows. The population that comes first in the order is mapped to
natural numbers in the rangle $1..m$, where $m$ is the cardinality of this
population. Any constants in this population are mapped to natural numbers in
the low end of the range. The next population in the order is mapped to natural
numbers starting from $m+1$ and so on. Thus, each typed variable is assigned a
domain of contiguous positive values.

The rest of the program transformation remains the same, the underlying
graphs are  constructed using the lifted operators. In order to illustrate some
of the operations described in this section, we present another example Param-Px
program (Figure \ref{ex:diceprog}) and show the construction of lifted explanation
graphs (Figure \ref{fig:dicegraph1} and Figure \ref{fig:dicegraph2}).

\myparagraph{Dice Example.} The listing in Figure \ref{ex:diceprog} shows a
Param-Px program, where the query tests if on rolling a set of dice, we got
atleast two ``ones'' or two ``twos''. The lifted explanation graph for the first
clause is obtained by first taking a conjunction of three lifted explanation
graphs: one for the constraint $\{X < Y\}$ and two corresponding to the
$\texttt{msw}$ goals and then quantifying the variables $X$ and $Y$. The lifted
explanation graph for the second clause is constructed in a similar
fashion. They are shown in Figure \ref{fig:dicegraph1}. Note, that we ommitted
the edge labels to avoid clutter as they are obvious. Next the two lifted
explanation graphs need to be combined by an $\lor$ operation. Let us denote the
two lifted explanation graphs to be combined as $(\Omega:\eta,\psi)$ and
$(\Omega':\eta',\psi')$. When $\lor$ operation is sought to be performed, it is
noticed that $Q(\Omega,\eta) \land Q(\Omega',\eta')$ is satisfiable. In fact,
$Q(\Omega,\eta)$ and $Q(\Omega,\eta')$ both evaluate to
$\texttt{true}$. Therefore, only one result is returned
\[
  (\Omega\cup\Omega':\eta \land \eta', \psi \lor \psi')
\]
The operation to be performed is the one described in recursive case $d$,
subcase $iii$ in the description earlier. Here we note that
$(roll,X) \not \sim (roll,X')$ and further the range of both instance variables
is same. Therefore the following simplified operation is performed
\[
  (\Omega:\eta, \psi \oplus \psi') \rightarrow (\Omega\cup\{X''\}\setminus\{X,X'\}:\eta[X''/X,X''/X'], (s,X'')[\alpha_i: \psi_i[X''/X] \lor \psi'_i[X''/X']])
\]
The result is shown in Figure \ref{fig:dicegraph2}.
\begin{figure}
\centering
\begin{lstlisting}[name=DiceExample]
% Get two "ones" or two "twos"
q :-
    X in dice,
    msw(roll, X, 1),
    Y in dice,
    {X < Y},
    msw(roll, Y, 1).

q :-
    X in dice,
    msw(roll, X, 2),
    Y in dice,
    {X < Y},
    msw(roll, Y, 2).

% Cardinality of dice:
:- population(dice, 100).

% Distribution parameters:
:- set_sw(roll, categorical([1:1/6, 2:1/6, 3:1/6, 4:1/6, 5:1/6, 6:1/6])).
\end{lstlisting}
\caption{Rolling dice Param-Px example}
\label{ex:diceprog}
\end{figure}

\begin{figure}
\centering
\begin{adjustbox}{scale=0.75}
\begin{tikzpicture}[shorten >=1pt,%
          inner sep = 0.5mm, %
          node distance=2cm,%
          every state/.style={circle}, 
          on grid,auto,%
          initial text=,%
          ellipsis/.style={dotted},%
          bend angle=30,
          level 1/.style={sibling distance=1.5cm}]

          \node[state,rectangle] (c11) at (0,1.25) {$\exists X \exists Y. X < Y$};
          \node[state,rectangle] (s11) at (0,0) {$(\mathtt{roll},X)$}
            child {node[state,rectangle] {$(\mathtt{roll},Y)$}
              child {node[state] {$1$}}
              child {node[state] {$0$}}
              child {node[state] {$0$}}
              child {node[state] {$0$}}
              child {node[state] {$0$}}
              child {node[state] {$0$}}}
            child {node[state] {$0$}}
            child {node[state] {$0$}}
            child {node[state] {$0$}}
            child {node[state] {$0$}}
            child {node[state] {$0$}};

          \node[state,rectangle] (c12) at (10,1.25) {$\exists X' \exists Y'. X' < Y'$};
          \node[state,rectangle] (s12) at (10,0) {$(\mathtt{roll},X')$}
            child {node[state] {$0$}}
            child {node[state,rectangle] {$(\mathtt{roll},Y')$}
              child {node[state] {$0$}}
              child {node[state] {$1$}}
              child {node[state] {$0$}}
              child {node[state] {$0$}}
              child {node[state] {$0$}}
              child {node[state] {$0$}}}
            child {node[state] {$0$}}
            child {node[state] {$0$}}
            child {node[state] {$0$}}
            child {node[state] {$0$}};

\end{tikzpicture}
\end{adjustbox}
\caption{Lifted expl. graphs for clauses in dice example}
\label{fig:dicegraph1}
\end{figure}
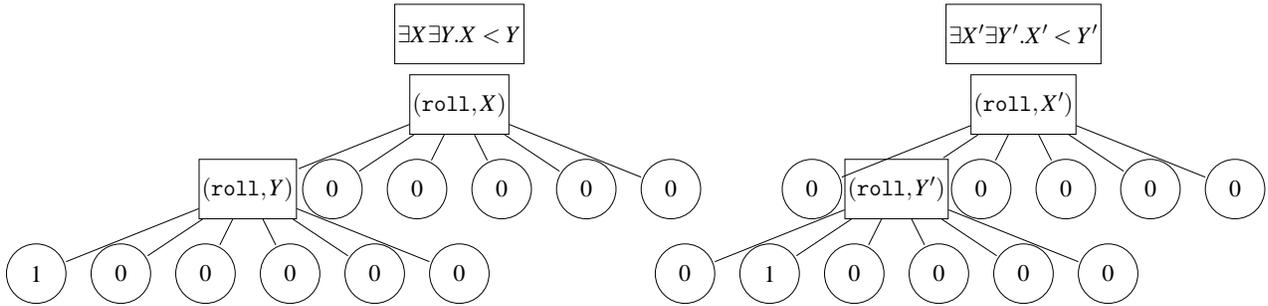

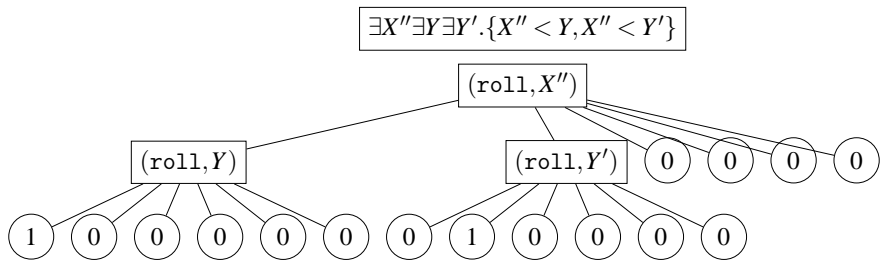
\begin{figure}
\centering
\begin{adjustbox}{scale=0.8}
\begin{forest}
  rounded/.style={circle, draw},
  squared/.style={rectangle,draw}
          [{$(\mathtt{roll},X'')$},squared,name=r
            [{$(\mathtt{roll},Y)$},squared
              [{$1$},rounded]
              [{$0$},rounded]
              [{$0$},rounded]
              [{$0$},rounded]
              [{$0$},rounded]
              [{$0$},rounded]]
            [{$(\mathtt{roll},Y')$},squared
              [{$0$},rounded]
              [{$1$},rounded]
              [{$0$},rounded]
              [{$0$},rounded]
              [{$0$},rounded]
              [{$0$},rounded]]
            [{$0$},rounded]
            [{$0$},rounded]
            [{$0$},rounded]
            [{$0$},rounded]]
          \node[draw] (c) at (r |- 0,30pt) {$\exists X'' \exists Y \exists Y'. \{X'' < Y, X'' < Y'\}$}[];
\end{forest}
\end{adjustbox}
\caption{Final lifted expl. graph for dice example}
\label{fig:dicegraph2}
\end{figure}

\comment{
Note that instance variables in a program range over populations.  We 
map instances to natural numbers when building the graph by treating
each population as a
range of natural numbers $1..n$, where $n$ is the cardinality of the
population.  Any constants in a population are mapped to natural numbers at
the low end of the range.  }

\comment{
Since multiple instances of a given switch are i.i.d
random variables, such summarization is natural and can help avoid redundant
computation. An example program is shown in Fig \ref{fig:lexpintropx}. Here the
query asks whether two coins landed heads from a population of coins. The ground
explanation tree in this scenario is shown in Fig
\ref{fig:lexpintroground}. Note that the root node and its left subtree are
similar to the right child of the root and its left subtree (with the instance
numbers being incremented by one). This kind of recursive structure arises due
t\section{Lifted Inference using Lifted
  Explanations}\label{sec:inference}

We now describe our two-step technique to compute answer probabilities
from a lifted explanation graph $\psi$.  In the first step, we derive a set of recurrences
from $\psi$, and in the second step, solve the
recurrences to obtain the probability.

Given a lifted explanation graph $\psi$ with $k$ free
variables,
we associate a function $f_\psi: N^k \rightarrow [0,1]$ that
maps $k$-tuples of natural numbers to reals in closed interval
$[0,1]$.  If the instance at the root of $\psi$ is quantified, we
associate two additional functions, $g_\psi: N^{k+1} \rightarrow
[0,1]$ and $h_\psi: N^{k+1} \rightarrow
[0,1]$ with $\psi$.  Since $\psi$ is well formed, we fix a 
total order on all variables in $\psi$, and use this order
consistently. We use $\vec{Z}_\psi$ to denote the sequence of
free variables of $\psi$ arranged in this fixed order.  

\comment{
\begin{Def}[Solutions to constraints]
Given a constraint $\eta$ and a sequence of $k$ 
variables $\vec{Z} = \langle Z_1, Z_2, \ldots Z_k$, 
the solution set of $\eta$, denoted by $\sem{\eta}$, is the largest subset of $N^k$
such that if $(j_1, j_2, \ldots, j_k) \in \sem{\eta}$ then
$\eta[j_1/Z_1, j_2/Z_2, \ldots,j_k/Z_k]$ is satisfiable.
\end{Def}

Given a constraint $\eta$ and a distinguished variable $X$, the
greatest lower bound on $X$ in any solution, denoted by
$\min_{X}(\eta)$, is an expression over other variables in $\eta$.
Similarly, the least upper bound on $X$ is denoted by
$\max_{X}(\eta)$.    For example, if $\eta = \{\langle Y<X \rangle,
\langle X<Z \rangle, \langle X\in p\rangle, \langle Y \in p \rangle,
\langle Z \in p \langle \}$ where $p$ is a population with cardinality
$n$, then $\min_X(\eta) = Y +1$ and $\max_X(\eta) = Z-1$.  Observe
also that $\max_Y(\eta) = X-1$ and $\max_Z(\eta) = n$. 
}

\begin{Def}[Probability Recurrences] 
Given a lifted explanation graph $\psi$, we define
$f_\psi$ (as well as  $g_\psi$ and $h_\psi$ wherever
applicable) based on the structure of $\psi$, as
follows:

\begin{description}
\item[Case 1:] $\psi = (\omega:\eta, 0)$:
\[
f_\psi(\vec{Z}_\psi) = 0
\]
\item[Case 2:] $\psi = (\omega:\eta, 1)$:
\[
f_\psi(\vec{Z}_\psi) = 
\left\{ \begin{array}{ll}
  1 & \mbox{if\ }\vec{Z}_\psi \in \sem{\eta}\\
  0 & \mbox{otherwise}
\end{array}\right.
\]

\item[Case 3:] $\psi = (\omega:\eta, (s,t)[\alpha_i:\psi_i])$ where $t$
  is a constant or free, i.e., $t \not \in \omega$:
\[
f_\psi(\vec{Z}_\psi) = 
\left\{ \begin{array}{ll}
\sum_i \pi_s(\alpha_i) \times
   f_{\psi_i}(\vec{Z}_{\psi_i})  & \mbox{if\ }\vec{Z}_\psi \in \sem{\eta}\\
  0 & \mbox{otherwise}
 \end{array}\right.
\]

\item[Case 4:] $\psi = (\omega:\eta, (s,X)[\alpha_i:\psi_i])$ and $X$
  is quantified at root of $\psi$, i.e. $X
  \in \omega$:

Let
  $ l = \min_{\eta}(X)$.  Then,
\[
\begin{array}{rl}
g_\psi(X,\vec{Z}_\psi) & = \sum_i \pi(\alpha_i)  \times f_{\psi_i}(\vec{Z}_{\psi_i})\\
h_\psi(X,\vec{Z}_\psi) &=
\left \{ \begin{array}{ll}
g(X,\vec{Z}_\psi) + [(1-g(X,\vec{Z}_\psi)) \times h_\psi(X+1,\vec{Z}_\psi)] & \mbox{if\ } X \leq \max_\eta(X)\\
f_{\psi_\bot}(\vec{Z}_{\psi_\bot}) & \mbox{otherwise}
\end{array} \right.\\
f_\psi(\vec{Z}_\psi) &= 
\left\{ \begin{array}{ll}
   h_\psi(l, \vec{Z}_\psi) & \mbox{if\ }\vec{Z}_\psi \in \sem{\eta}\\
  0 & \mbox{otherwise}
 \end{array}\right.
\end{array}
\]

\end{description}
\end{Def}

The recurrences defining $f_\psi$, $g_\psi$ and $h_\psi$ are
well-defined, and are computable, given values for the free variables
of $\psi$.  

\begin{Def}[Probability of Lifted Explanation
  Graph]\label{def:prob-lifted-expl}
Let $\psi$ be a closed lifted explanation graph.  Then, the
probability of explanations represented by the graph,  $\id{prob}(\psi)$,
is the value of $f_\psi()$. 
\end{Def}

\begin{Lem}[Correctness of Lifted Inference]
Let $\psi$ be a closed lifted explanation graph, and $\phi = Gr(\psi)$
be the corresponding ground explanation graph.  Then $\id{prob}(\psi)
= \id{prob}(\phi)$.  
\end{Lem}

Note that a graph with $k$ free
variables may generate a recurrence with $k+1$ parameters.  Each
recurrence equation itself is either of constant size or bounded by
the number of children of a node.  Using dynamic
programming (possibly implemented via tabling), a solution to the
recurrence equations can be computed in polynomial time.

\begin{Lem}[Efficiency of Lifted Inference]
Let $\psi$ be a closed lifted inference graph, $n$ be the size of the
largest population, and $k$ be the largest number of free variables at
the root of any subgraph of $\psi$.  Then, $f_\psi()$ can be computed in $O(|\psi|
\times n^{k+1})$ time.
\end{Lem}

There are two sources of further optimization in the generation and
evaluation of recurrences.  When constructing explanation graphs, we
liberally propagated constraints, anticipating possible future need.
Before generating the recurrences, we can prune this set to only those
that are \emph{used} in each subgraph.  This reduces the arity of
recurrence functions and promotes reuse of their results as well.
Moreover, certain recurrences may be transformed into closed form
formulae which can be more efficiently evaluated.  For instance, the
closed form formula for $h_2$ (Equation~\ref{eqn:h2},
page~\pageref{eqn:h2}) 
can be evaluated in $O(\log(n))$ time while a naive
evaluation of the recurrence takes $O(n)$ time.

\myparagraph{Other Examples.} 
There are a number of simple probabilistic models that cannot be
tackled by other lifted inference techniques but can be encoded
in Param-Px and solved using our technique.  For one such example,
consider an urn with $n$ balls, where the color of each ball is given
by a distribution.  Determining the probability that there are at
least two green balls is easy to phrase as a directed first-order
graphical model.  However, lifted inference over such models can no
longer be applied if we need to determine the probability of at least
two green or two red balls. The probability computation for one of these events
can be viewed as a generalization of noisy-OR probability computation, however
dealing with the union requires the handling of intersection of
the two events, due to which the $O(\log(N))$ time computation is no
longer feasible. 

For a more complex example, we use an instance of a \emph{collective
graphical model}~\cite{NIPS2011_4220}.  
In particular, consider a system of $n$ agents where each
agent moves between various states in a stochastic manner.  Consider a
query to evaluate whether there are at least $k$ agents in a given
state $s$ at a given time $t$.  Note that we cannot compile a model of
this system into a clausal form without knowing the query.  This
system can be represented as a PRISM/Px program by modeling each
agent's evolution as a Markov model.  The size of the lifted
explanation graph, and the number of recurrences for this query is
$O(k.t)$.  The recurrences are evaluated along three dimensions---
time, total number of agents, and number of agents in state $s$---
resulting in a time complexity of $O(n.k.t)$.  

o use of populations to ground instance variables. By not grounding instance
variables and using constraints on them the ground explanation tree can be
compactly represented as shown in Fig \ref{fig:lexpintrocompact}. The compact
structure can be ``unrolled'' to give the ground explanation tree by grounding
the instance variable of the root with the first available instance and
replacing all ``0'' children of the root with copies of the compact tree, with
additional constraints to exclude the first instance. The exit branch of the
root is discarded. When there are no satisfying instances to ground the root,
then the ``0'' children are replaced by the subtree of the exit branch. This
process is carried out for the remaining nodes as well. One step of this
expansion is shown in Fig \ref{fig:lexpintrocompactexpand}. The \emph{lifted explanation graphs}
and the various operations on them are defined formally in the rest of this
section.
\begin{figure}
\centering
\begin{lstlisting}[language=prolog, mathescape=true]
q :-
    $\langle X \in Coins \rangle$,
    msw(toss, X, heads),
    $\langle Y \in Coins \rangle$,
    $\langle X < Y \rangle$,
    msw(toss, Y, heads).
\end{lstlisting}
\caption{Px program}
\label{fig:lexpintropx}
\end{figure}
\begin{figure}
\begin{minipage}[b]{0.45\textwidth}
\centering
\begin{adjustbox}{scale=0.8}
\begin{tikzpicture}
  \tikzstyle{vertex}=[rectangle,fill=blue!20,draw]
  \tikzstyle{edge} = [draw, thick, ->]
  \node[vertex] (l1n1) at (0,0) {$(toss,1)$};
  \node[vertex] (l2n1) at (-3,-2) {$(toss,2)$};
  \node[vertex] (l2n2) at (3,-2) {$(toss,2)$};
  \node (l3n1) at (-4,-4) {1};
  \node[vertex] (l3n2) at (-2,-4) {$(toss,3)$};
  \node[vertex] (l3n3) at (2,-4) {$(toss,3)$};
  \node[vertex] (l3n4) at (4,-4) {$(toss,3)$};
  \node (l4n1) at (-3,-6) {1};
  \node[vertex] (l4n2) at (-1,-6) {$(toss,4)$};
  \node (l4n3) at (1,-6) {1};
  \node[vertex] (l4n4) at (3,-6) {$(toss,4)$};

  \node (l5n1) at (-2,-7) {};
  \node (l5n2) at (0, -7) {};
  \node (l5n3) at (2,-7) {};
  \node (l5n4) at (4,-7) {};

  \node (l4n5) at (3,-5) {};
  \node (l4n6) at (5,-5) {};

  \path[edge] (l1n1) -- node[left, fill=yellow] {h} (l2n1);
  \path[edge] (l1n1) -- node[right, fill=yellow] {t} (l2n2);
  \path[edge] (l2n1) -- node[left, fill=yellow] {h} (l3n1);
  \path[edge] (l2n1) -- node[right, fill=yellow] {t} (l3n2);
  \path[edge] (l2n2) -- node[left, fill=yellow] {h} (l3n3);
  \path[edge] (l2n2) -- node[right, fill=yellow] {t} (l3n4);
  \path[edge] (l3n2) -- node[left, fill=yellow] {h} (l4n1);
  \path[edge] (l3n2) -- node[right, fill=yellow] {t} (l4n2);
  \path[edge] (l3n3) -- node[left, fill=yellow] {h} (l4n3);
  \path[edge] (l3n3) -- node[right, fill=yellow] {t} (l4n4);

  \path[draw,dashed] (l3n4) -- (l4n5);
  \path[draw,dashed] (l3n4) -- (l4n6);
  \path[draw,dashed] (l4n2) -- (l5n1);
  \path[draw,dashed] (l4n2) -- (l5n2);
  \path[draw,dashed] (l4n4) -- (l5n3);
  \path[draw,dashed] (l4n4) -- (l5n4);
\end{tikzpicture}
\end{adjustbox}
\caption{Ground explanation tree}
\label{fig:lexpintroground}
\end{minipage}
\quad \quad \quad \quad
\begin{minipage}[b]{0.45\textwidth}
\centering
\begin{adjustbox}{scale=0.6}
\begin{tikzpicture}
  \tikzstyle{vertex}=[rectangle,fill=blue!20,draw]
  \tikzstyle{edge} = [draw, thick, ->]
  \node[vertex] (l1n1) at (0,0) {$(\exists X\{X \in Coins\}, (toss,X))$};
  \node[vertex] (l2n1) at (-3,-2) {$(\exists Y\{Y \in Coins, X < Y\}, (toss,Y))$};
  \node (l2n2) at (0,-2) {0};
  \node (l2n3) at (3,-2) {0};
  \node (l3n1) at (-5,-4) {1};
  \node (l3n2) at (-3,-4) {0};
  \node (l3n3) at (-1,-4) {0};
  \path[edge] (l1n1) -- node[left,fill=yellow] {h} (l2n1);
  \path[edge] (l1n1) -- node[right,fill=yellow] {t} (l2n2);
  \path[edge] (l1n1) -- node[right,fill=yellow] {exit} (l2n3);
  \path[edge] (l2n1) -- node[left,fill=yellow] {h} (l3n1);
  \path[edge] (l2n1) -- node[right,fill=yellow] {t} (l3n2);
  \path[edge] (l2n1) -- node[right,fill=yellow] {exit} (l3n3);
\end{tikzpicture}
\end{adjustbox}
\caption{Compact representation}
\label{fig:lexpintrocompact}
\vfill
\centering
\begin{adjustbox}{scale=0.6}
\begin{tikzpicture}
  \tikzstyle{vertex}=[rectangle,fill=blue!20,draw]
  \tikzstyle{edge} = [draw, thick, ->]
  \node[vertex] (l1n1) at (0,0) {$(toss,1)$};
  \node[vertex] (l2n1) at (-3,-2) {$(\exists Y\{Y \in Coins, 1 < Y\}, (toss,Y))$};
  \node[vertex] (l2n2) at (3,-2) {$(\exists X\{X \in Coins, 1 < X\}, (toss,X))$};
  \node (l3n1) at (-5,-4) {1};
  \node (l3n2) at (-3,-4) {0};
  \node (l3n3) at (-1,-4) {0};
  \node[vertex] (l3n4) at (1.5,-4) {$(\exists Y\{Y \in Coins, X < Y\}, (toss,Y))$};
  \node (l3n5) at (4,-4) {0};
  \node (l3n6) at (5,-4) {0};
  \node (l4n1) at (0,-6) {1};
  \node (l4n2) at (1.5,-6) {0};
  \node (l4n3) at (3,-6) {0};
  \path[edge] (l1n1) -- node[left,fill=yellow] {h} (l2n1);
  \path[edge] (l1n1) -- node[right,fill=yellow] {t} (l2n2);
  \path[edge] (l2n1) -- node[left,fill=yellow] {h} (l3n1);
  \path[edge] (l2n1) -- node[right,fill=yellow] {t} (l3n2);
  \path[edge] (l2n1) -- node[right,fill=yellow] {exit} (l3n3);
  \path[edge] (l2n2) -- node[left,fill=yellow] {h} (l3n4);
  \path[edge] (l2n2) -- node[left,fill=yellow] {t} (l3n5);
  \path[edge] (l2n2) -- node[right,fill=yellow] {exit} (l3n6);
  \path[edge] (l3n4) -- node[left,fill=yellow] {h} (l4n1);
  \path[edge] (l3n4) -- node[left,fill=yellow] {t} (l4n2);
  \path[edge] (l3n4) -- node[right,fill=yellow] {exit} (l4n3);
\end{tikzpicture}
\end{adjustbox}
\caption{Expansion of compact representation}
\label{fig:lexpintrocompactexpand}
\end{minipage}
\end{figure}
}

\section{Lifted Inference using Lifted
  Explanations}\label{sec:inference}
In this section we describe a technique to compute answer probabilities in a
lifted fashion from closed lifted explanation graphs. This technique works on a
restricted class of lifted explanation graphs satisfying a property we call the
\emph{ frontier subsumption property}. 
\begin{Def}[Frontier]
  Given a closed lifted explanation graph $(\Omega:\eta,\psi)$, the frontier of
  $\psi$ w.r.t $X \in \Omega$ denoted $\id{frontier}_X(\psi)$ is the set of
  non-zero maximal subtrees of $\psi$, which do not contain a node with $X$ as
  the instance variable.
\end{Def}
Analogous to the set representation of explanations described in
\ref{sec:pxandinf}, we consider the set representations of lifted explanations,
i.e., root-to-leaf paths in the DAGs of lifted explanation graphs that end in a
``1'' leaf.

We consider \emph{term substitutions} that can be applied to lifted explanations.
These substitutions replace a variable by a term and further apply standard 
re-writing rules such as simplification of algebraic expressions. As before, we
allow \emph{term mappings} that specify a set of \emph{term substitutions}.
\begin{Def}[Frontier subsumption property]
  A closed lifted explanation graph $(\Omega:\eta,\psi)$ satisfies the frontier
  subsumption property w.r.t $X \in \Omega$, if under term mappings
  $\sigma_1 = \{X \pm k + 1 / Y \mid \langle X \pm k < Y \rangle \in \eta\}$ and
  $\sigma_2 = \{X + 1 / X\}$, every tree $\phi \in \id{frontier}_X(\psi)$
  satisfies the following condition: for every lifted explanation $E_2$ in
  $\psi$, there is a lifted explanation $E_1$ in $\phi$ such that $E_1\sigma_1$ is 
  a sub-explanation (i.e., subset) of $E_2 \sigma_2$.
\end{Def}
A lifted explanation graph is said to satisfy frontier subsumption property, if
it is satisfied for each bound variable. This property can be checked in a
bottom up fashion for all bound variables in the graph. The tree obtained by
replacing all subtrees in $\id{frontier}_X(\psi)$ by $1$ in $\psi$ is denoted
$\widehat{\psi}_X$.

\myparagraph{Examples for frontier subsumption property.}
Note that the lifted explanation graph in Figure \ref{fig:intro-ex}(c) satisfies
the frontier subsumption property. The frontier w.r.t the instance variable $X$
contains the left subtree rooted at $(\mathtt{toss},Y)$. The single lifted
explanation in $\psi$ is $\{toss[X]=h,toss[Y]=h\}$. The single explanation in
the frontier tree is $\{toss[Y]=h\}$. Under the mappings $\sigma_1=\{X+1/Y\}$
and $\sigma_2=\{X+1/X\}$. We can see that $\{toss[Y]=h\}\sigma_1$ is a
sub-explanation of $\{toss[X]=h,toss[Y]=h\}\sigma_2$. This property is not
satisfied however for the dice example. The frontier w.r.t $X''$ (see Figure
\ref{fig:dicegraph2}) contains the two subtrees rooted at $(\mathtt{roll},Y)$
and $(\mathtt{roll},Y')$. Now there are two lifted explanations in $\psi$:
$\{roll[X'']=1,roll[Y]=1\}$ and $\{roll[X'']=2,roll[Y']=2\}$. Let us consider
the first tree in the frontier. It contains a single lifted explanation
$\{roll[Y]=1\}$. Under the mappings $\sigma_1=\{X''+1/Y\}$ and
$\sigma_2=\{X''+1/X''\}$, $\{roll[Y]=1\}\sigma_1$ is not a sub-explanation of
$\{roll[X'']=2,roll[Y']=2\}\sigma_2$.

For closed lifted explanation graphs satisfying the frontier subsumption property, the
probability of query answers can be computed using the following set of
recurrences. With each subtree $\psi = (s,t)[\alpha_i:\psi_i]$ of the DAG of the
lifted explanation graph, we associate the function $f(\sigma,\psi)$ where
$\sigma$ is a (possibly incomplete) mapping of variables in $\Omega$ to values
in their domains.
\begin{Def}[Probability recurrences]\label{def:recurrences}
  Given a closed lifted explanation graph $(\Omega:\eta,\psi)$, we define
  $f(\sigma, \psi)$ (as well as $g(\sigma,\psi)$ and $h(\sigma,\psi)$ wherever
  applicable) for a partial mapping $\sigma$ of variables in $\Omega$ to values
  in their domains based on the structure of $\psi$. As before
  $\psi=(s,t)[\alpha_i:\psi_i]$
\begin{description}
  \item [Case 1:] $\psi$ is a $0$ leaf node. Then
    $
      f(\sigma, 0) = 0
    $
  \item [Case 2:] $\psi$ is a $1$ leaf node. Then
    $
      f(\sigma, 1) = 
      \begin{cases}
        1, \mbox{ if } \sem{\eta\sigma} \neq \emptyset\\
        0, \mbox{ otherwise}
      \end{cases}
    $
  \item [Case 3:] $t\sigma$ is a constant. Then
    $
      f(\sigma, \psi) = 
      \begin{cases}
        \sum_{\alpha_i \in D_s} \pi_s(\alpha_i) \cdot f(\sigma, \psi_i),\mbox{
          if } \sem{\eta\sigma} \neq \emptyset\\
        0, \mbox{ otherwise}
      \end{cases}
    $
  \item [Case 4:] $t\sigma \in \Omega$, and
    $range(t,\eta\sigma) = (l,u)$. Then
    \begin{align*}
      f(\sigma, \psi) = &
                          \begin{cases}
                            h(\sigma[l/t],\psi), \mbox{ if } \sem{\eta\sigma} \neq \emptyset\\
                            0, \mbox{ otherwise}
                          \end{cases}\\
      h(\sigma[c/t],\psi) = & 
                              \begin{cases}
                                g(\sigma[c/t],\psi) + ((1 -
                                P(\widehat{\psi}_{X})) \times
                                h(\sigma[c+1/t],\psi)), \mbox{ if } c < u\\
                                g(\sigma[c/t],\psi), \mbox{ if } c = u
                              \end{cases}\\
      g(\sigma,\psi) = & 
                              \begin{cases}
                                \sum_{\alpha_i \in D_s} \pi_s(\alpha_i) \cdot
                                f(\sigma, \psi_i), \mbox{ if } \sem{\eta\sigma} \neq \emptyset\\
                                0, \mbox{ otherwise}
                              \end{cases}
    \end{align*}
\end{description}
\end{Def}
In the above definition $\sigma[c/t]$ refers to a new partial mapping obtained
by augmenting $\sigma$ with the substitution $[c/t]$,
$P(\widehat{\psi}_t)$ 
is the sum of the probabilities of all branches
leading to a $1$ leaf in 
$\widehat{\psi}_t$.
The functions $f$, $g$ and $h$ defined above can be readily specialized for
each $\psi$.  Moreover, the parameter $\sigma$ can be replaced by the
tuple of values actually used by a function.  These rewriting
steps yield recurrences such as those shown in
Fig.~\ref{fig:intro-ex-eqn}.  Note that $P(\widehat{\psi}_t)$ can be
computed using recurrences as well (shown as $\widehat{f}$ in 
Fig.~\ref{fig:intro-ex-eqn}).

\begin{Def}[Probability of Lifted Explanation
  Graph]\label{def:prob-lifted-expl}
Let $(\Omega:\eta,\psi)$ be a closed lifted explanation graph.  Then, the
probability of explanations represented by the graph,  $\id{prob}((\Omega:\eta,\psi))$,
is the value of $f(\{\},\psi)$. 
\end{Def}

\begin{restatable}[Correctness of Lifted Inference]{Thm}{infcorrect}
\label{thm:inference-correctness}
  Let $(\Omega:\eta,\psi)$ be a closed lifted explanation graph, and
  $\phi = Gr(\Omega:\eta,\psi)$ be the corresponding ground explanation graph.
  Then $\id{prob}((\Omega:\eta,\psi)) = \id{prob}(\phi)$.
\end{restatable}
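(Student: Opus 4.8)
The plan is to prove $\id{prob}((\Omega:\eta,\psi)) = \id{prob}(\phi)$ by structural induction on $\psi$, showing simultaneously that the recurrences of Definition~\ref{def:recurrences} compute, for each subtree, the probability of the corresponding grounded subtree under the semantics of Definition~\ref{def:grounding}. More precisely, I would state and prove the strengthened invariant: for every subtree $\psi'$ of $\psi$ and every partial mapping $\sigma$ of bound variables to values consistent with $\eta$ (i.e. $\sem{\eta\sigma}\neq\emptyset$), $f(\sigma,\psi') = \id{prob}(\id{Gr}(\Omega\setminus\id{dom}(\sigma)\colon\eta\sigma,\psi'\sigma))$, where $\id{prob}$ on the right is the ground explanation-graph probability of Section~\ref{sec:pxandinf}. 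The base cases ($0$ and $1$ leaves) are immediate from the definitions, and the constant-instance case (Case 3) matches the ground recurrence $\id{prob}((s,z)[v_i:\phi_i]) = \sum_i\pi_s(v_i)\cdot\id{prob}(\phi_i)$ termwise after noting that substitution commutes with grounding (an easy consequence of Definition~\ref{def:substitution} and Definition~\ref{def:grounding}).

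The heart of the argument is Case 4, the quantified-instance case. Here the grounding is the disjunction $\bigvee_{c\in\id{range}(t,\eta\sigma)}\phi_{(s,c)}$, so I must show that the ground probability of this disjunction equals $h(\sigma[l/t],\psi)$ as unrolled by the recurrence. The recurrence treats the disjunction ``incrementally'': $h(\sigma[c/t],\psi) = g(\sigma[c/t],\psi) + (1-P(\widehat{\psi}_X))\cdot h(\sigma[c+1/t],\psi)$. The first step is to recognize this as an inclusion–exclusion / noisy-OR–style decomposition: the probability that \emph{some} disjunct with index $\geq c$ succeeds equals the probability that the $c$-th disjunct succeeds (that is $g(\sigma[c/t],\psi) = \id{prob}(\phi_{(s,c)})$, which follows from Case 3 applied one level down), plus the probability that the $c$-th fails times the probability that some disjunct with index $\geq c+1$ succeeds. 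The correctness of this decomposition is exactly where the \emph{frontier subsumption property} is needed: it guarantees that, after the shift $\sigma_2 = \{X{+}1/X\}$ and the induced shift $\sigma_1$ on variables constrained below $X$, every lifted explanation of the ``tail'' $h(\sigma[c+1/t],\psi)$ contains (as a subset, hence a logically weaker condition) some explanation of the frontier $\widehat{\psi}_X$, so that conditioning the tail on the failure of disjunct $c$ only needs to remove the part of disjunct $c$'s success probability that lies \emph{outside} the frontier — and the frontier, by construction, is precisely the part of $\psi$ not mentioning $X$, whose success probability is the shift-invariant quantity $P(\widehat{\psi}_X)$. I would make this precise by showing that the events ``disjunct $c$ succeeds via its frontier part'' and ``some disjunct $>c$ succeeds'' are related so that $\Pr[\text{some disjunct}\geq c] = \Pr[\text{disjunct }c] + \Pr[\neg(\text{disjunct }c)]\cdot\Pr[\text{some disjunct}>c \mid \neg(\text{disjunct }c)]$ and that the conditional on the right equals the unconditional $h(\sigma[c+1/t],\psi)$ up to the factor $P(\widehat{\psi}_X)$ already extracted — a claim that reduces, via subsumption, to the fact that the extra constraint imposed by failure of disjunct $c$ is implied by (subsumed by) the constraints already present in every tail explanation.

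I expect the main obstacle to be exactly this step: rigorously justifying that the frontier subsumption property makes the incremental ``peeling'' of the disjunction sound, i.e. that no double-counting occurs beyond what the $(1-P(\widehat{\psi}_X))$ factor corrects. The subtlety is that the disjuncts $\phi_{(s,c)}$ for different $c$ are \emph{not} independent events (they share the frontier subtrees and the constraints couple $c$ with the free/bound variables below it), so a naive product formula would be wrong; the subsumption hypothesis is what lets one argue that the only overlap that matters is through the $X$-free frontier, whose probability is the same for every $c$ by translation invariance of the i.i.d. switch distributions. I would isolate this as a lemma — stated in terms of ground explanation graphs and their set-of-explanations semantics — proved by an inclusion–exclusion computation over the finitely many disjuncts, using the subsumption relation $E_1\sigma_1 \subseteq E_2\sigma_2$ to collapse cross terms. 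Once that lemma is in hand, the induction closes routinely: Definition~\ref{def:prob-lifted-expl} gives $\id{prob}((\Omega:\eta,\psi)) = f(\{\},\psi)$, and the invariant at the root with empty $\sigma$ yields $f(\{\},\psi) = \id{prob}(\id{Gr}(\Omega:\eta,\psi)) = \id{prob}(\phi)$, as required.
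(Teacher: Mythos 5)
Your proposal follows essentially the same route as the paper's proof: structural induction with the invariant $f(\sigma,\psi') = \id{prob}$ of the grounded subtree, immediate base and constant-instance cases, and, for the quantified case, the same incremental peeling of the ground disjunction $\bigvee_{c}\phi_{(s,c)}$, where frontier subsumption justifies replacing the overlap correction by failure of the instance-$c$ part $\widehat{\psi}_t[c/t]$, which is independent of the tail and has the same probability for every $c$ by i.i.d.-ness. The only caveat is a wording slip: $P(\widehat{\psi}_X)$ is the probability of the part of $\psi$ that \emph{does} mention $X$ (frontier subtrees replaced by $1$), not of the $X$-free frontier itself --- but since that is exactly how you use the quantity, the argument coincides with the paper's.
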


Given a closed lifted explanation graph, let $k$ be the maximum number of
instance variables along any root to leaf path. Then the function
$f(\sigma,\psi)$ for the leaf will have to be computed for each mapping of the
$k$ variables. Each recurrence equation itself is either of constant size or
bounded by the number of children of a node.  Using dynamic programming
(possibly implemented via tabling), a solution to the recurrence equations can
be computed in polynomial time.

\begin{Thm}[Efficiency of Lifted Inference]\label{thm:inference-efficiency}
  Let $\psi$ be a closed lifted inference graph, $n$ be the size of the largest
  population, and $k$ be the largest number of instance variables along any root
  of leaf path in $\psi$. Then, $f(\{\},\psi)$ can be computed in
  $O(|\psi| \times n^{k})$ time.
\end{Thm}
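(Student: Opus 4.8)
The plan is to read the recurrences of Definition~\ref{def:recurrences} as the specification of a memoized (dynamic-programming) evaluation, and then to count: (i) how many distinct table entries can arise, and (ii) how much work each one costs given the values it depends on. Correctness of the value returned is already supplied by Theorem~\ref{thm:inference-correctness}, so nothing about \emph{what} is computed needs to be re-proved here; only the time bound is at stake. First I would check that the recursion is well-founded so that a bottom-up fill is legitimate: in Cases~1--4 every recursive mention of $f$ is at a strict subtree $\psi_i$ of the current node, the call from $g$ to $f$ is likewise at a child $\psi_i$, the call from $f$ to $h$ and from $h$ to $g$ stays at the same node but changes the function, and the only genuine self-reference, $h(\sigma[c+1/t],\psi)$ inside $h(\sigma[c/t],\psi)$, strictly decreases $u-c$ and so unwinds in at most $n$ steps once $\psi$ and the rest of $\sigma$ are fixed. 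Hence a table indexed by (DAG node, partial map $\sigma$, choice of $f/g/h$) can be computed and the process terminates.

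The crux is bounding, for a fixed DAG node $\psi'$, the number of distinct $\sigma$ with which $f$, $g$ or $h$ can be invoked at $\psi'$. By inspection of the recurrences $\sigma$ is threaded strictly top-down, and its domain is enlarged only in Case~4, by the single bound variable $t$ labelling that node's root; therefore, along any root-to-$\psi'$ path actually traversed, $\id{dom}(\sigma)$ consists exactly of those bound variables whose Case~4 node lies on the path. Using well-structuredness (each bound variable has a unique anchor node, which is an ancestor, labelled by that variable), these variables all occur among the instance-variable labels on the path, and for the $g$- and $h$-entries the freshly quantified $t$ is itself one of those labels; this is precisely the point that keeps the arity at $k$ rather than $k+1$. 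Since every root-to-$\psi'$ path extends to a root-to-leaf path, which by hypothesis carries at most $k$ instance-variable labels, we get $|\id{dom}(\sigma)|\le k$ at every node and for every entry type. Each bound instance variable is typed by a single population and is moreover confined by $\id{range}(\cdot,\eta\sigma)$ to an interval of size at most $n$, so there are at most $n^{k}$ admissible $\sigma$ per node; summing over the $|\psi|$ nodes and the constant number of entry kinds gives $O(|\psi|\times n^{k})$ table entries.

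It remains to bound the work per entry. Given the already-tabulated values of the subcalls it names, evaluating one equation of Definition~\ref{def:recurrences} is $O(1)$ except for the $\sum_{\alpha_i\in D_s}$ cases, which cost $O(\mathrm{outdeg}(\psi'))$; the quantities $P(\widehat{\psi}_t)$ are produced by the same recurrences applied to $\widehat{\psi}_t$, whose size is bounded by $|\psi|$, and so fold into the same budget. The DBM side-conditions — deciding $\sem{\eta\sigma}\neq\emptyset$ and reading off $\id{range}(t,\eta\sigma)$ — take time that depends only on the fixed number of program variables and not on $n$, and are treated as constant-time. Since $\sum_{\psi'}\mathrm{outdeg}(\psi')=O(|\psi|)$, multiplying per-entry cost by the number of entries yields $O(|\psi|\times n^{k})$.

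I expect the second paragraph to be the real obstacle: stating precisely which variables may populate $\id{dom}(\sigma)$ at a given DAG node and proving, in the DAG (not merely tree) setting and via well-structuredness, that this set never exceeds $k$ — in particular that the auxiliary argument of $g$ and $h$ does not inflate the arity to $k+1$. The well-foundedness check and the per-entry accounting are routine by comparison.
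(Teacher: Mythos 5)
Your proposal is correct and follows essentially the same route as the paper, which justifies the theorem only by the brief remark preceding it: tabulate the recurrences of Definition~\ref{def:recurrences} by dynamic programming, note that each function at a node need only be evaluated for the at most $n^{k}$ mappings of the instance variables on the path to it, and that each equation costs constant time or time proportional to the node's out-degree. Your elaboration of the arity bound (that the freshly added variable $t$ in the $g$/$h$ entries is itself one of the $k$ path variables, via well-structuredness) and of the treatment of $P(\widehat{\psi}_t)$ simply fills in details the paper leaves implicit.
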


There are two sources of further optimization in the generation and evaluation
of recurrences. First, certain recurrences may be transformed into closed form
formulae which can be more efficiently evaluated. For instance, the closed form
formula for $h(\sigma,\psi)$ for the subtree rooted at the node $(toss,Y)$ in
Fig~\ref{fig:intro-ex}(c) can be evaluated in $O(\log(n))$ time while a naive
evaluation of the recurrence takes $O(n)$ time. Second, certain functions
$f(\sigma,\psi)$ need not be evaluated for every mapping $\sigma$ because they
may be independent of certain variables. For example, leaves are always
independent of the mapping $\sigma$.

\myparagraph{Other Examples.} 
There are a number of simple probabilistic models that cannot be
tackled by other lifted inference techniques but can be encoded
in Param-Px and solved using our technique.  For one such example,
consider an urn with $n$ balls, where the color of each ball is given
by a distribution.  Determining the probability that there are at
least two green balls is easy to phrase as a directed first-order
graphical model.  However, lifted inference over such models can no
longer be applied if we need to determine the probability of at least
two green or two red balls. The probability computation for one of these events
can be viewed as a generalization of noisy-OR probability computation, however
dealing with the union requires the handling of intersection of
the two events, due to which the $O(\log(N))$ time computation is no
longer feasible. 

For a more complex example, we use an instance of a \emph{collective
graphical model}~\cite{NIPS2011_4220}.  
In particular, consider a system of $n$ agents where each
agent moves between various states in a stochastic manner.  Consider a
query to evaluate whether there are at least $k$ agents in a given
state $s$ at a given time $t$.  Note that we cannot compile a model of
this system into a clausal form without knowing the query.  This
system can be represented as a PRISM/Px program by modeling each
agent's evolution as a Markov model.  The size of the lifted
explanation graph, and the number of recurrences for this query is
$O(k.t)$.  When the recurrences are evaluated along three dimensions:
time, total number of agents, and number of agents in state $s$,
resulting in a time complexity of $O(n.k.t)$.  


\section{Related Work and Discussion}\label{sec:related}
First-order graphical models \cite{poole2003first,braz2005lifted} are compact
representations of propositional graphical models over populations. The key
concepts in this field are that of \emph{parameterized random variables} and
\emph{parfactors}. A parameterized random variable stands for a population of
i.i.d. propositional random variables (obtained by grounding the logical
variables). Parfactors are factors (potential functions) on parameterized random
variables. By allowing large number of identical factors to be specified in a
first-order fashion, first-order graphical models provide a representation that
is independent of the population size. A key problem, then, is to
perform \emph{lifted}  probabilistic
inference over these models, i.e. without grounding the factors
unnecessarily.  The earliest such technique was \emph{inversion
  elimination} due to~\citeN{poole2003first}. When summing out a
parameterized random variable (i.e., all its groundings), it is observed that if
all the logical variables in a parfactor are contained in the parameterized
random variable, it can be summed out without grounding the parfactor.

The idea of \emph{inversion elimination}, though powerful, exploits
one of the many forms of symmetry present in first-order graphical
models.  Another kind of symmetry present in such models 
is that the values of an intermediate factor may depend on the
histogram of propositional random variable outcomes, rather than their exact
assignment. This symmetry is exploited by \emph{counting elimination}
\cite{braz2005lifted} and elimination by \emph{counting formulas}
\cite{milch2008lifted}.

\citeN{van2011lifted} presented a form of lifted inference that 
uses constrained CNF theories with positive and
negative weight functions over predicates as input. Here the task of
probabilistic inference in transformed to one of weighted model
counting. To do the latter, the CNF theory is compiled into a
structure known as first-order deterministic decomposable negation normal form.
The compiled representation allows lifted inference by
avoiding grounding of the input theory.
This technique is applicable so long as
the model can be formulated as a constrained CNF theory.

\citeN{bellodi2014lifted} present another approach to lifted
inference 
for probabilistic logic programs.
The idea is to convert a ProbLog program to
parfactor representation and use a modified version of generalized counting first
order variable elimination algorithm \cite{taghipour2013lifted} to perform
lifted inference.  Problems where the model size is dependent on the
query, such as models with temporal aspects, are difficult to solve
with the knowledge compilation approach.

In this paper, we presented a technique for lifted inference in probabilistic
logic programs using lifted explanation graphs.  This technique is a
natural generalization of inference techniques based on ground
explanation graphs, and follows the two step approach: generation of
an explanation graph, and a subsequent traversal to compute
probabilities.   While the size of the lifted explanation graph is often
independent of population, computation of probabilities may take
time that is polynomial in the size of the population.  A more
sophisticated approach to computing probabilities from lifted
explanation graph, by generating closed form formulae where possible,
will enable efficient inference.   Another direction of research would
be to generate hints for lifted inference based on program constructs
such as aggregation operators.  Finally, our future work is focused on
performing lifted inference over probabilistic logic programs that
represent undirected and discriminative models.

\subparagraph{Acknowledgments.}  This work was supported in part 
    by NSF grants IIS 1447549  and CNS 1405641. 
We thank Andrey Gorlin for discussions and review of this work.  



\appendix
\section{Proofs}

\andorcorrect*
\begin{proof}
  The proof is by structural induction.
  \begin{description}

    \item [Case 1:] $Q(\Omega,\eta) \land Q(\Omega',\eta')$ is unsatisfiable.
      \begin{description}
      \item[Case 1.1:] $\land$ operation
        
        $Q(\Omega,\eta) \land Q(\Omega,\eta')$ is unsatisfiable, implies that
        for each $\sigma \in \Sigma$, either one or both of
        $Q(\Omega,\eta)\sigma$ and $Q(\Omega',\eta')\sigma$ are
        unsatisfiable. Either one or both
        of $\exists\Omega.\eta\sigma$ and $\exists\Omega'.\eta'\sigma$ are
        unsatisfiable. By definition \ref{def:substitution}, either one or both
        of the $(\Omega:\eta,\psi)\sigma$ and $(\Omega':\eta',\psi')\sigma$
        should be $(\emptyset:\{\texttt{false}\},0)$. By definition
        \ref{def:grounding}, we know that
        $Gr((\emptyset:\{\texttt{false}\},0)) = 0$. Therefore
        $Gr((\Omega:\eta,\psi)\sigma) \land Gr((\Omega':\eta',\psi')\sigma) =
        0$. The $\land$ operation in this case is defined as
        \[
          (\Omega:\eta,\psi) \land (\Omega':\eta',\psi') \rightarrow
          (\emptyset:\{\texttt{false}\},0)
        \]
        Therefore,
        $Gr(((\Omega:\eta,\psi) \land (\Omega':\eta',\psi'))\sigma) =
        Gr((\Omega:\eta,\psi)\sigma) \land Gr((\Omega':\eta',\psi')\sigma)$.

      \item[Case 1.2:] $\lor$ operation

        Assume without loss of generality that
        $(\Omega':\eta',\psi')\sigma = (\emptyset:\{\texttt{false}\},0)$. Then
        by definition \ref{def:grounding}
        $Gr((\Omega:\eta,\psi)\sigma) \lor Gr((\Omega':\eta',\psi')\sigma) =
        Gr((\Omega:\eta,\psi)\sigma)$. Since the definition of $\lor$ operation
        in this case is to backtrack and return both $(\Omega:\eta,\psi)$ and
        $(\Omega':\eta',\psi')$, under the substitution $\sigma$,
        $(\Omega':\eta',\psi')$ will be discarded.  Therefore,
        $Gr(((\Omega:\eta,\psi) \lor (\Omega':\eta',\psi'))\sigma) =
        Gr((\Omega:\eta,\psi)\sigma) \lor Gr((\Omega':\eta',\psi')\sigma)$.
      \end{description}

    \item[Case 2:] $Q(\Omega,\eta) \land Q(\Omega',\eta')$ is satisfiable. 
      \begin{description}
      \item[Case 2.1:] $\land$ operation

        The $\land$ operation is defined as 
          $
            (\Omega:\eta,\psi) \land (\Omega':\eta',\psi') \rightarrow (\Omega
            \cup \Omega': \eta \land \eta', \psi \land \psi')
          $
          \begin{description}
            \item[Case 2.1.1:] $(Q(\Omega,\eta) \land Q(\Omega',\eta'))\sigma$
              is unsatisfiable

              $(Q(\Omega,\eta)\land Q(\Omega',\eta'))\sigma$ is unsatisfiable,
              implies atleast one of $Q(\Omega,\eta)\sigma$ and
              $Q(\Omega',\eta)\sigma$ is unsatisfiable. Atleast one of
              $\exists\Omega.\eta\sigma$ and $\exists\Omega'.\eta'\sigma$ is
              unsatisfiable. By definition \ref{def:substitution}, atleast one
              of $(\Omega:\eta,\psi)\sigma$ and $(\Omega':\eta',\psi')\sigma$ is
              equal to $(\emptyset:\{\texttt{false}\},0)$. By definition
              \ref{def:grounding},
              $Gr((\Omega:\eta,\psi)\sigma) \land
              Gr((\Omega':\eta',\psi')\sigma) = 0$. Further, if one of
              $\exists\Omega.\eta\sigma$ and $\exists\Omega'.\eta'\sigma$ is
              unsatisfiable $\exists \Omega\cup\Omega'. \eta \land \eta' \sigma$
              is also unsatisfiable. Therefore, by definition
              \ref{def:substitution},
              $Gr((\Omega\cup\Omega':\eta \land \eta', \psi \land \psi')\sigma)
              = 0$. Therefore
              $Gr(((\Omega:\eta,\psi) \land (\Omega':\eta',\psi'))\sigma) =
              Gr((\Omega:\eta,\psi)\sigma) \land
              Gr((\Omega':\eta',\psi')\sigma)$.

            \item[Case 2.1.1:] $(Q(\Omega,\eta) \land Q(\Omega',\eta'))\sigma$
              is satisfiable.
              
              \begin{description}
                \item[Case 2.1.1.1:] $\psi = 0$ (analogously $\psi'=0$).

                  By definition \ref{def:grounding}, $Gr((\Omega:\eta,\psi)\sigma) \land
                  Gr((\Omega':\eta',\psi')\sigma) = 0$. Based on the definition
                  of $\land$ operation
                  $(\Omega\cup\Omega':\eta \land \eta', \psi \land \psi')\sigma =
                  (\Omega\cup\Omega':\eta \land \eta', 0)\sigma$. By definition
                  \ref{def:grounding}
                  $Gr((\Omega\cup\Omega':\eta\land\eta',0)\sigma) = 0$. Therefore,
                  $Gr(((\Omega:\eta,\psi) \land (\Omega':\eta',\psi'))\sigma) =
                  Gr((\Omega:\eta,\psi)\sigma) \land
                  Gr((\Omega':\eta',\psi')\sigma)$.

                \item[Case 2.1.1.2:] $\psi = 1$ (analogously $\psi'=1$).
                  
                  By definition \ref{def:grounding},
                  $Gr((\Omega:\eta,\psi)\sigma) = 1$. Therefore
                  $Gr((\Omega:\eta,\psi)\sigma) \land
                  Gr((\Omega':\eta',\psi')\sigma) =
                  Gr((\Omega':\eta',\psi')\sigma)$.
                  Based on the definition of $\land$ operation
                  $(\Omega\cup\Omega':\eta\land\eta', \psi\land\psi') =
                  (\Omega\cup\Omega':\eta\land\eta',\psi')$. By definition
                  \ref{def:substitution}
                  $(\Omega\cup\Omega':\eta\land\eta',\psi')\sigma =
                  (\Omega\cup\Omega':(\eta\land\eta')\sigma,
                  \psi'\sigma)$ and $(\Omega':\eta',\psi')\sigma =
                  (\Omega':\eta'\sigma, \psi'\sigma)$. We can claim that
                  $Gr((\Omega':\eta'\sigma, \psi'\sigma)) =
                  Gr((\Omega\cup\Omega':(\eta\land\eta')\sigma, \psi'\sigma))$
                  because, $range(t,\eta'\sigma) =
                  range(t,(\eta\land\eta')\sigma)$ for any $t \in \Omega'$ Why?
                  Because there exist no variables in common between
                  $\eta\sigma$ and $\eta'\sigma$ and $\eta\sigma \land
                  \eta'\sigma$ is satisfiable based on the assumptions.

                \item[Case 2.1.1.3]: Neither $\psi$ nor $\psi'$ is a leaf node
                  and $(s,t) < (s',t')$ (analogously $(s',t') < (s,t)$).

                  Since $(Q(\Omega,\eta) \land Q(\Omega',\eta')\sigma)$ is
                  satisfiable, we can conclude that $\eta\sigma$ and
                  $\eta'\sigma$ are satisfiable. By definition
                  \ref{def:substitution},
                  $Gr((\Omega:\eta,\psi)\sigma) \land
                  Gr((\Omega':\eta',\psi')\sigma) =
                  Gr(\Omega:\eta\sigma,\psi\sigma) \land
                  Gr(\Omega':\eta'\sigma,\psi'\sigma)$. Let us consider the case
                  where $t$ is either a constant or a free variable. Since
                  $(s,t) < (s',t')$ it implies that
                  $(s,t\sigma) \prec (s',t'\sigma)$. By definition
                  \ref{def:grounding},
                  $Gr(\Omega:\eta\sigma, \psi\sigma) \land
                  Gr(\Omega':\eta'\sigma, \psi'\sigma) =
                  Gr(\Omega,\eta\sigma,\psi\sigma) \land
                  Gr(\Omega',\eta'\sigma,\psi'\sigma)$. Further
                  $Gr(\Omega:\eta\sigma, \psi\sigma) \land
                  Gr(\Omega':\eta'\sigma, \psi'\sigma) =
                  (s,t\sigma)[\alpha_i:Gr(\Omega,\eta\sigma,\psi_i\sigma) \land
                  Gr(\Omega',\eta'\sigma,\psi'\sigma)]$. Based on the definition
                  of $\land$ operation
                  $(\Omega\cup\Omega':\eta\land\eta',\psi\land\psi')\sigma =
                  (\Omega\cup\Omega':\eta\cup\eta', (s,t)[\alpha_i:\psi_i \land
                  \psi'])\sigma$. Therefore,
                  $Gr((\Omega\cup\Omega':\eta\land\eta',\psi\land\psi')\sigma) =
                  Gr(\Omega\cup\Omega', (\eta\land\eta')\sigma,
                  (s,t)[\alpha_i:\psi_i \land \psi']\sigma) =
                  (s,t\sigma)[\alpha_i:Gr(\Omega\cup\Omega',
                  (\eta\land\eta')\sigma, (\psi_i \land \psi')\sigma)]$. Based
                  on inductive hypothesis,
                  $Gr(\Omega,\eta\sigma,\psi_i\sigma) \land
                  Gr(\Omega',\eta'\sigma,\psi'\sigma) =
                  Gr(\Omega\cup\Omega',(\eta\land\eta')\sigma,
                  (\psi_i\land\psi')\sigma)$. Now consider the case when
                  $t \in \Omega$. By definition \ref{def:grounding}
                  $Gr(\Omega:\eta,\psi)\sigma \land
                  Gr(\Omega':\eta',\psi')\sigma =
                  Gr(\Omega,\eta\sigma,\psi\sigma) \land
                  Gr(\Omega',\eta'\sigma,\psi'\sigma) = \bigvee_{c \in
                    range(t,\eta\sigma)}
                  (s,c)[\alpha_i:Gr(\Omega\setminus\{t\},\eta\sigma[c/t],\psi_i\sigma[c/t])
                  \land Gr(\Omega,\eta'\sigma,\psi'\sigma)]$. Similarly,
                  $Gr(\Omega\cup\Omega', (\eta\land\eta')\sigma,
                  (s,t)[\alpha_i:\psi_i\land\psi']\sigma) = \bigvee_{c \in
                    range(t,(\eta\land\eta')\sigma)}
                  (s,c)[\alpha_i:Gr(\Omega\cup\Omega'\setminus\{t\},(\eta\land\eta')\sigma[c/t],
                  (\psi_i\land\psi')\sigma[c/t])]$. But $range(t,\eta\sigma) =
                  range(t,(\eta\land\eta')\sigma)$ and based on inductive
                  hypothesis, $Gr(\Omega\setminus\{t\},
                  \eta\sigma[c/t],\psi_i\sigma[c/t]) \land
                  Gr(\Omega',\eta'\sigma,\psi'\sigma) =
                  Gr(\Omega\cup\Omega'\setminus\{t\},
                  (\eta\land\eta')\sigma[c/t], (\psi_i\land\psi')\sigma[c/t])$

                \item[Case 2.1.1.4:] Neither $\psi$ nor $\psi'$ is a leaf node
                  and $(s,t)=(s',t')$.

                  Since the variables in the lifted explanation graphs are
                  standardized apart, this implies that neither $t$ nor $t'$ is
                  a bound variable. $Gr((\Omega:\eta,\psi)\sigma) \land
                  Gr((\Omega':\eta',\psi')\sigma) =
                  (s,t\sigma)[\alpha_i:Gr(\Omega,\eta\sigma,\psi_i\sigma) \land
                  Gr(\Omega',\eta'\sigma, \psi'_i\sigma)]$. Similarly,
                  $Gr(\Omega\cup\Omega', (\eta\land\eta')\sigma,
                  (s,t)[\alpha_i:\psi_i\land\psi'_i]\sigma) =
                  (s,t\sigma)[\alpha_i:Gr(\Omega\cup\Omega',
                  (\eta\land\eta')\sigma, (\psi_i\land\psi'_i)\sigma)]$. Based
                  on inductive hypothesis, $Gr(\Omega,\eta\sigma,\psi_i\sigma)
                  \land Gr(\Omega', \eta'\sigma, \psi'_i\sigma) =
                  Gr(\Omega\cup\Omega', (\eta\land\eta')\sigma,
                  (\psi_i\land\psi'_i)\sigma)$.

                \item[Case 2.1.1.5:] Neither $\psi$ nor $\psi'$ is a leaf node
                  and $(s,t) \not \sim (s',t')$ and $t$ is a free variable or a
                  constant and $t'$ is a free variable.

                  Consider the case where $t$ and $t'$ are free variables, or
                  $t$ is a constant and $t'$ is a free variable. Based
                  on $\sigma$ exactly one of $(s,t\sigma) < (s,t'\sigma)$,
                  $(s,t\sigma)=(s',t\sigma)$ and $(s,t'\sigma) < (s,t\sigma)$
                  will hold. According to the definition of $\land$ operation,
                  three lifted explanation graphs are returned
                  $(\Omega\cup\Omega':\eta\land\eta'\land t<t',
                  \psi\land\psi')$, $(\Omega\cup\Omega':\eta\land\eta'\land t=t',
                  \psi\land\psi')$, and $(\Omega\cup\Omega':\eta\land\eta'\land t'<t,
                  \psi\land\psi')$. Under the substitution $\sigma$ only one
                  will be retained. And the proof then proceeds as in case
                  2.1.1.3 or case 2.1.1.4.

                \item[Case 2.1.1.6:] Neither $\psi$ nor $\psi'$ is a leaf node
                  and $(s,t) \not \sim (s',t')$ and $t$ is a free variable or a
                  constant and $t'$ is a bound variable. Consider
                  $Gr((\Omega:\eta,\psi)\sigma) \land
                  Gr((\Omega':\eta',\psi')\sigma)$. This can be written as
                  $(s,t\sigma)[\alpha_i:Gr(\Omega,\eta\sigma, \psi_i\sigma)] \land
                  \bigvee_{c \in range(t',\eta'\sigma)} (s,c)[\alpha'_i:
                  Gr(\Omega'\setminus\{t'\},\eta'\sigma[c/t'],\psi'_i\sigma[c/t'])]$. By
                  using continuity of range we can rewrite it as
                  \begin{align*}
                    &
                      (s,t\sigma)[\alpha_i:Gr(\Omega,\eta\sigma,\psi_i\sigma)]\land \\
                    & \bigvee_{c \in range(t',(\eta'\land t < t')\sigma)}(s,c)[\alpha'_i:
                      Gr(\Omega'\setminus\{t'\},\eta'\sigma[c/t'],\psi'_i\sigma[c/t'])]\\
                    \vee
                    &
                      (s,t\sigma)[\alpha'_i:Gr(\Omega'\setminus\{t'\},\eta'\sigma[t\sigma/t'],
                      \psi'_i\sigma[t\sigma/t'])]\\
                    & \vee \bigvee_{c \in range(t',(\eta'\land t' < t)\sigma)}(s,c)[\alpha'_i:
                      Gr(\Omega'\setminus\{t'\},\eta'\sigma[c/t'],\psi'_i\sigma[c/t'])]
                  \end{align*}
                  By distributivity of $\land$ over $\lor$ for ground
                  explanation graphs, we can rewrite it as 
                  \begin{align*}
                     (s,t\sigma) & [\alpha_i:Gr(\Omega,\eta\sigma,\psi_i\sigma)]\land\\
                    & \bigvee_{c \in range(t',(\eta'\land t < t')\sigma)}(s,c)[\alpha'_i:
                      Gr(\Omega'\setminus\{t'\},\eta'\sigma[c/t'],\psi'_i\sigma[c/t'])]\\
                    \vee
                     (s,t\sigma) & [\alpha_i:Gr(\Omega,\eta\sigma,\psi_i\sigma)]\land
                      (s,t\sigma) [\alpha'_i:Gr(\Omega'\setminus\{t'\},\eta'\sigma[t\sigma/t'],
                      \psi'_i\sigma[t\sigma/t'])]\\
                    \vee
                     (s,t\sigma) & [\alpha_i:Gr(\Omega,\eta\sigma,\psi_i\sigma)]\\
                    & \bigvee_{c \in range(t',(\eta'\land t' < t)\sigma)}(s,c)[\alpha'_i:
                      Gr(\Omega'\setminus\{t'\},\eta'\sigma[c/t'],\psi'_i\sigma[c/t'])]
                  \end{align*}
                  This can be re-written as 
                  \begin{align*}
                    Gr((\Omega:\eta,\psi)\sigma) \land Gr((\Omega':\eta'\land t<
                    t', \psi')\sigma) \\
                    \vee Gr((\Omega:\eta,\psi)\sigma) \land Gr((\Omega':\eta'
                    \land t=t', \psi')\sigma) \\
                    Gr((\Omega:\eta,\psi)\sigma) \land Gr((\Omega':\eta'\land
                    t'<t, \psi')\sigma)
                  \end{align*}
                  By inductive hypothesis this is equal to 
                  \[
                    \bigvee_{\varphi \in \{t<t',t=t',t'<t'\}}
                    Gr((\Omega\cup\Omega':\eta\land\eta'\land\varphi, \psi \land \psi')\sigma)
                  \]

                \item[Case 2.1.1.7:] Neither $\psi$ nor $\psi'$ is a leaf node
                  and $(s,t) \not \sim (s',t')$ and $t,t'$ are bound variables.

                  When $l_1=l_2$ and $u_1=u_2$, $Gr((\Omega:\eta,\psi)\sigma)
                  \land Gr((\Omega':\eta',\psi')\sigma)$ can be written as 
                  \[
                    \bigvee_{c \in
                      [l_1,u_1]}(s,c)[\alpha_i:Gr((\Omega,\eta,\psi_i)\sigma[c/t])]
                    \land
                    \bigvee_{c' \in
                      [l_2,u_2]}(s,c')[\alpha_i:Gr((\Omega',\eta',\psi'_i)\sigma[c'/t'])]
                  \]
                  Let the sequence of positive integers in the interval
                  $[l_1,u_1]$ be $\langle l_1=k_1,k_2,\ldots,k_n=u_1 \rangle$. Then the
                  above expression can be re-written as
                  \begin{align*}
                    ( (s,k_1) & [\alpha_i:Gr((\Omega,\eta,\psi_i)\sigma[k_1/t])]\land \\
                    & \bigvee_{c' \in [k_1,k_n]}
                    (s,c')[\alpha_i:Gr((\Omega',\eta',\psi'_i)\sigma[c/t'])] )\vee \\
                    ( (s,k_2) & [\alpha_i:Gr((\Omega,\eta,\psi_i)\sigma[k_2/t])]\land \\
                    & \bigvee_{c' \in [k_1,k_n]}
                    (s,c')[\alpha_i:Gr((\Omega',\eta',\psi'_i)\sigma[c'/t'])] )\vee\\
                    & \ldots\\
                    ( (s,k_n) & [\alpha_i:Gr((\Omega,\eta,\psi_i)\sigma[k_n/t])]\land \\
                    & \bigvee_{c' \in [k_1,k_n]}
                    (s,c')[\alpha_i:Gr((\Omega',\eta',\psi'_i)\sigma[c'/t'])] )
                  \end{align*}
                  This can again be re-written as
                  \begin{align*}
                    \bigvee_{c \in [k_1,k_n]}
                    ( (s,c) & [\alpha_i:Gr((\Omega,\eta,\psi_i)\sigma[c/t])] \land\\
                    & (s,c)[\alpha_i:Gr((\Omega',\eta',\psi'_i)\sigma[c/t'])] )\\
                    \bigvee_{d \in [k_1,k_{n-1}]}
                    ( (s,d) & [\alpha_i:Gr((\Omega,\eta,\psi_i)\sigma[d/t])] \land \\
                    & \bigvee_{e \in [d+1, k_n]}
                    (s,e)[\alpha_i:Gr((\Omega',\eta',\psi'_i)\sigma[e/t'])] )\\
                    \bigvee_{f \in [k_1,k_{n-1}]}
                    ( (s,f) & [\alpha_i:Gr((\Omega',\eta',\psi'_i)\sigma[f/t])] \land\\
                    & \bigvee_{g \in [d+1, k_n]}
                    (s,g)[\alpha_i:Gr((\Omega,\eta,\psi_i)\sigma[g/t'])] )
                  \end{align*}
                  By moving the substitutions out we get the following equivalent expression.
                  \begin{align*}
                    \bigvee_{c \in [k_1,k_n]}
                     (s,t) & [\alpha_i:Gr((\Omega,\eta,\psi_i))]\sigma[c/t] \land\\
                    & (s,t')[\alpha_i:Gr((\Omega',\eta',\psi'_i))]\sigma[c/t'] )\\
                    \bigvee_{d \in [k_1,k_{n-1}]}
                    ( (s,t) & [\alpha_i:Gr((\Omega,\eta,\psi_i))]\sigma[d/t] \land\\
                    & \bigvee_{e \in [d+1, k_n]}
                    (s,t') [\alpha_i:Gr((\Omega',\eta',\psi'_i))]\sigma[e/t'] )\\
                    \bigvee_{f \in [k_1,k_{n-1}]}
                    ( (s,t') & [\alpha_i:Gr((\Omega',\eta',\psi'_i))]\sigma[f/t'] \land\\
                    & \bigvee_{g \in [f+1, k_n]}
                    (s,t) [\alpha_i:Gr((\Omega,\eta,\psi_i))]\sigma[g/t] )
                  \end{align*}
                  Since $[d+1,k_n] = \emptyset$ and $[f+1,k_n] =\emptyset$ when
                  $d = k_n$ and $f=k_n$ respectively, the above expression can
                  be re-written as follows.
                  \begin{align*}
                    \bigvee_{c \in [k_1,k_n]}
                    ( (s,t) & [\alpha_i:Gr((\Omega,\eta,\psi_i))]\sigma[c/t] \land\\
                    & (s,t')[\alpha_i:Gr((\Omega',\eta',\psi'_i))]\sigma[c/t'] )\\
                    \bigvee_{d \in [k_1,k_n]}
                    ( (s,t) & [\alpha_i:Gr((\Omega,\eta,\psi_i))]\sigma[d/t] \land \\
                    & \bigvee_{e \in [d+1, k_n]}
                    (s,t')[\alpha_i:Gr((\Omega',\eta',\psi'_i))]\sigma[e/t'] )\\
                    \bigvee_{f \in [k_1,k_n]}
                    ( (s,t') & [\alpha_i:Gr((\Omega',\eta',\psi'_i))]\sigma[f/t'] \land\\
                    & \bigvee_{g \in [f+1, k_n]}
                    (s,t)[\alpha_i:Gr((\Omega,\eta,\psi_i))]\sigma[g/t] )
                  \end{align*}
                  First we transform the above expression by using $c$ in place of $d$ and $f$.
                  Further, we perform a simple renaming operation with a new variable $t''$ to
                  get the following equivalent expression.
                  \begin{align*}
                    \bigvee_{c \in [k_1,k_n]}
                    ( (s,t'') & [\alpha_i:Gr((\Omega,\eta,\psi_i))]\sigma[t''/t][c/t''] \land\\
                    & (s,t'')[\alpha_i:Gr((\Omega',\eta',\psi'_i))]\sigma[t''/t'][c/t''] )\\
                    \bigvee_{c \in [k_1,k_n]}
                    ( (s,t'') & [\alpha_i:Gr((\Omega,\eta,\psi_i))]\sigma[t''/t][c/t''] \land\\
                    & \bigvee_{e \in [c+1, k_n]}
                    (s,t') [\alpha_i:Gr((\Omega',\eta',\psi'_i))]\sigma[e/t'] )\\
                    \bigvee_{c \in [k_1,k_n]}
                    ( (s,t'') & [\alpha_i:Gr((\Omega',\eta',\psi'_i))]\sigma[t''/t'][c/t''] \land\\
                    & \bigvee_{g \in [c+1, k_n]}
                    (s,t)[\alpha_i:Gr((\Omega,\eta,\psi_i))]\sigma[g/t] )
                  \end{align*}
                  The above expression can now be simplified as follows
                  \begin{align*}
                    & \bigvee_{c \in [k_1,k_n]}
                    (s,t'')[\alpha_i: \\
                    & \left( Gr((\Omega,\eta,\psi_i))]\sigma[t''/t][c/t''] \land
                    Gr((\Omega',\eta',\psi'_i))]\sigma[t''/t'][c/t''] \right) \\
                    & \vee 
                    \left( Gr((\Omega,\eta,\psi_i))]\sigma[t''/t][c/t'']
                    \land \bigvee_{e \in [c+1, k_n]}
                    (s,t')[\alpha_i:Gr((\Omega',\eta',\psi'_i))]\sigma[e/t']
                      \right)\\
                    & \vee
                    \left( Gr((\Omega',\eta',\psi'_i))]\sigma[t''/t'][c/t'']
                    \land \bigvee_{g \in [c+1, k_n]}
                    (s,t)[\alpha_i:Gr((\Omega,\eta,\psi_i))]\sigma[g/t] \right)]
                  \end{align*}
                  If the range of $t''$ can be forced to be $[l_1,u_1]$ and if we introduce
                  additional constraints $t''<t$ and $t''<t'$, the above grounding expression
                  is equivalent to the grounding of the following lifted explanation graph
                  \begin{align*}
                    & (\Omega\cup\{t''\}:\eta \land l_1-1<t'' \land t''-1 < u_1 \land 
                      t'' < t \land t'' < t',\\
                    & (s,t'')[ \alpha_i: \\
                    & (\psi_i[t''/t] \oplus \psi'_i[t''/t']) \lor \\
                    & (\psi_i[t''/t] \oplus \psi') \lor \\
                    & (\psi'_i[t''/t'] \oplus \psi)])
                  \end{align*}
                  Therefore the theorem is proved in this case.
                  
                  The proof for remaining cases is analogous and
                  straightforward, because based on the values of $l_1,u_1,l_2$
                  and $u_2$ we have disjuncts where the ranges of root variables
                  are either identical or non-overlapping. Both of these cases
                  have been proved already.
             \end{description}
          \end{description}
        \item[Case 2.2:] Operation $\lor$
          \begin{description}
            \item[Case 2.2.1:] When $Q(\Omega,\eta)$ is not identical to
              $Q(\Omega',\eta')$. The definition of $\lor$ operation returns
              several lifted explanation graphs. If $Q(\Omega,\eta)\sigma$ and
              $Q(\Omega',\eta')\sigma$ are both unsatisfiable then we can see
              that $Gr((\Omega:\eta,\psi)\sigma) \lor
              Gr((\Omega':\eta',\psi')\sigma) = 0$. Further, all the lifted
              explanation graphs returned by the definition of $\lor$ have
              unsatisfiable constraints therefore, the theorem is proved in this
              case. When $Q(\Omega,\eta)\sigma$ is satisfiable but not
              $Q(\Omega',\eta')\sigma$ ( or vice-versa), $Gr((\Omega:\eta,\psi)\sigma) \lor
              Gr((\Omega':\eta',\psi')\sigma) =
              Gr((\Omega:\eta,\psi)\sigma)$. Based on the definition of $\lor$
              operation, only the first lifted explanation graph has satisfiable
              constraint, so the theorem is proved. Same reasoning applies in
              the symmetric case. 
            \item[Case 2.2.2:] When $Q(\Omega,\eta)$ is identical to
              $Q(\Omega',\eta')$ and $Q(\Omega,\eta)\sigma$ is unsatisfiable,
              the proof of the theorem is trivial. So we consider the case where
              $Q(\Omega,\eta)\sigma$ and $Q(\Omega',\eta')\sigma$ are both
              satisfiable and $Q(\Omega,\eta)$ may or may not be identical to
              $Q(\Omega',\eta')$.
              \begin{description}
                \item[Case 2.2.2.1:] When $\psi=0$ (analogously $\psi'=0$). Here
                  $Gr((\Omega:\eta,\psi)\sigma) \lor
                  Gr((\Omega':\eta',\psi')\sigma) =
                  Gr((\Omega':\eta',\psi')\sigma)$. Similarly
                  $Gr((\Omega\cup\Omega':\eta\land\eta', \psi')\sigma) =
                  Gr((\Omega':\eta', \psi')\sigma)$. Therefore the theorem is
                  proved.
                \item[Case 2.2.2.2:] When $\psi=1$ (analogously $\psi'=1$). Here
                  $Gr((\Omega:\eta,\psi)\sigma) \lor
                  Gr((\Omega':\eta',\psi')\sigma) = 1$. Similarly
                  $Gr((\Omega\cup\Omega':\eta\land\eta', 1)\sigma) =
                  1$. Therefore, the theorem is proved.
                \item[Case 2.2.2.3:] Neither $\psi$ nor $\psi'$ is a leaf node
                  and $(s,t) < (s,t')$ (analogously $(s',t') < (s,t)$). Proof is
                  analogous to Case 2.1.1.3.
                \item[Case 2.2.2.4:] Neither $\psi$ nor $\psi'$ is a leaf node
                  and $(s,t)=(s',t')$. Proof is analogous to Case 2.1.1.4.
                \item[Case 2.2.2.5:] Neither $\psi$ nor $\psi'$ is a leaf node
                  and $(s,t) \not \sim (s',t')$ and $t$ is a free variable or a
                  constant and $t'$ is a free variable. Proof is analogous to
                  Case 2.1.1.5.
                \item[Case 2.2.2.6:] Neither $\psi$ nor $\psi'$ is a leaf node
                  and $(s,t) \not \sim (s',t')$ and $t$ is a free variable or a
                  constant and $t'$ is a bound variable. Proof is analogous to
                  Case 2.1.1.6.
                \item[Case 2.2.2.7:] Neither $\psi$ nor $\psi'$ is a leaf node
                  and $(s,t) \not \sim (s',t')$ and $t,t'$ are bound
                  variables. Proof is analogous to Case 2.1.1.7. However, we need
                  to show that the simplified form is valid. Again consider the RHS
                  \[
                    \bigvee_{c \in
                      [l_1,u_1]}(s,c)[\alpha_i:Gr((\Omega,\eta,\psi_i)\sigma[c/t])]
                    \land
                    \bigvee_{c' \in
                      [l_2,u_2]}(s,c')[\alpha_i:Gr((\Omega',\eta',\psi'_i)\sigma[c'/t'])]
                  \]
                  We can perform two renaming operations and rewrite the above expression
                  as
                  \begin{align*}
                    \bigvee_{c \in
                      [l_1,u_1]}(s,c) & [\alpha_i:Gr((\Omega,\eta,\psi_i)\sigma[t''/t][c/t])]
                    \land\\
                    & \bigvee_{c' \in
                      [l_2,u_2]}(s,c')[\alpha_i:Gr((\Omega',\eta',\psi'_i)\sigma[t''/t'][c'/t'])]
                  \end{align*}
                  It is straightforward to see that the above grounding is equivalent to 
                  \[
                    Gr(\Omega\cup\Omega'\cup\{t''\}\setminus\{t,t''\}: \eta\land\eta'[t''/t,t''/t'], (s,t'')[\alpha_i:\psi_i[t''/t] \lor \psi'_i[t''/t']])
                  \]
              \end{description}
          \end{description}
      \end{description}
  \end{description}
\end{proof}

\quantifycorrect*
\begin{proof}
  Let us first consider the case when the root is $(s,X)$ for some switch $s$ in
  $\psi$. $quantify((\Omega:\eta,\psi),X) = (\Omega\cup\{X\}:\eta, \psi)$. If
  $\eta\sigma_{-X}$ is unsatisfiable, then
  $Gr(quantify((\Omega:\eta,\psi),X)\sigma_{-X}) = 0$. Next,
  $\bigvee_{\sigma \in \Sigma} Gr((\Omega:\eta,\psi)\sigma[c/X]) = 0$ since
  $\eta\sigma$ is also unsatisfiable for any $\sigma$. On the other hand if
  $\eta\sigma_{-X}$ is satisfiable,
\begin{align*}
Gr(quantify((\Omega:\eta,\psi),X)\sigma_X) & =
                                           Gr((\Omega\cup\{X\}:\eta,\psi)\sigma_{-X})\\
                                         & =
                                           Gr((\Omega\cup\{X\}:\eta\sigma_{-X},\psi\sigma_{-X}))\\
                                         & = \bigvee_{c \in range(X,\eta\sigma_{-X})}
                                           (s,c)[\alpha_i:
                                           Gr(\Omega\setminus\{X\},
                                           \eta\sigma_{-X}[c/X], \psi_i\sigma_{-X}[c/X])]
\end{align*}
Next,
\[
\bigvee_{\sigma \in \Sigma} Gr((\Omega:\eta,\psi)\sigma)  =
\bigvee_{c \in \sigma_X(\eta\sigma_{-X})}
  (s,c)[\alpha_i:Gr(\Omega\setminus\{X\},\eta\sigma_{-X}[c/X],\psi_i\sigma_{-X}[c/X])]
\]
By using continuity of range,
$range(X, \eta\sigma_{-X}) = \sigma_X(\eta\sigma_{-X})$. Therefore the theorem is
proved in this case. Now consider the case where $X$ doesn't occur in the root
of the lifted explanation graph. Since, the lifted explanation graph is
well-structured, there is subtree $\psi'$ in $\psi$ such that the root of
$\psi'$ contains $X$ and all occurrences of $X$ are within $\psi'$. If we remove
the subtree $\psi'$ from $\psi$, then
$Gr(quantify((\Omega:\eta,\psi),X)\sigma_{-X}) = \bigvee_{\sigma \in \Sigma}
Gr((\Omega:\eta,\psi)\sigma)$ since all the disjuncts on the right hand
side will be identical to each other and to the ground explanation tree on the
left hand side. Therefore, we need only show that the grounding of the subtree
$\psi'$ when $X$ is a quantified variable is same as
$\bigvee_{\sigma \in \Sigma} Gr((\Omega,\eta\sigma,\psi'\sigma)$ which we
already showed.
\end{proof}

\infcorrect*
\begin{proof}
Consider the following modification of the grounding algorithm for lifted
explanation graphs. An extra argument is added to $Gr(\Omega,\eta,\psi)$ to make
it $Gr(\Omega,\eta,\psi,\sigma)$. Whenever a variable is substituted by a value
from its domain, $\sigma$ is augmented to record the substitution. Further the
set $\Omega$ and the constraint formula $\eta$ are not altered when recursively
grounding subtrees. Rather, $\eta\sigma$ is tested for satisfiability and
$t\sigma$ is tested for membership in $\Omega$ to determine if a node contains
bound variable. The grounding of a lifted explanation graph $(\Omega:\eta,\psi)$
is given by $Gr(\Omega,\eta,\psi,\{\})$. It is easy to see that the ground
explanation tree produced by this modified procedure is same as that produced by
the procedure given in definition \ref{def:grounding}. 

We will prove that if $Gr(\Omega,\eta,\psi,\sigma) = \phi$, then $prob(\phi) =
f(\sigma, \psi)$. We prove this using structural induction based on the
structure of $\psi$.
\begin{description}
  \item[Case 1:] If $\psi$ is a $0$ leaf node, then $Gr(\Omega,\eta,0, \sigma) =
    0$. Therefore $prob(\phi) = f(\sigma,\psi) = 0$.
  \item[Case 2:] If $\psi$ is a $1$ leaf node, and $\eta\sigma$ is satisfiable,
    then $Gr(\Omega,\eta,\psi,\sigma) = 1$ and $prob(\phi) = f(\sigma,\psi)$. On
    the other hand if $\eta\sigma$ is not satisfiable, then
    $Gr(\Omega,\eta,\psi,\sigma) = 0$ and $prob(\phi) = f(\sigma,\psi)$.
  \item[Case 3:] If $\psi = (s,t)[\alpha_i:\psi_i]$ and $t\sigma \not \in
    \Omega$, and $\eta\sigma$ is satisfiable, then $\phi =
    (s,t\sigma)[\alpha_i:Gr(\Omega,\eta,\psi_i,\sigma)]$. Therefore, $prob(\phi)
    = \sum_{\alpha_i \in D_s} \pi_s(\alpha_i) \cdot
    prob(Gr(\Omega,\eta,\psi_i,\sigma))$. But $f(\sigma,\psi) = \sum_{\alpha_i
      \in D_s} \pi_s(\alpha_i) \cdot f(\sigma, \psi_i)$. Therefore, by inductive
    hypothesis the theorem is proved in this case. If $\eta\sigma$ is
    unsatisfiable, then $\phi=0$, therefore $prob(\phi) = f(\sigma,\psi)$.
  \item[Case 4:] If $\psi = (s,t)[\alpha_i:\psi_i]$ and $t\sigma \in \Omega$ and
    $\eta\sigma$ is satisfiable. In this case $\phi$ is defined as the
    disjunction of the ground trees
    $(s,c)[\alpha_i:Gr(\Omega,\eta,\psi_i,\sigma[c/t])]$ where
    $c \in range(t,\eta\sigma)$. Let us order the grounding trees in the
    increasing order of the value $c$. Given two trees $\phi_{(s,c)}$ and
    $\phi_{(s,c+1)}$ corresponding to values $c,c+1 \in range(t,\eta\sigma)$,
    the $\lor$ operation on ground trees, would recursively perform disjunction
    of the $\phi_{(s,c+1)}$ with the subtrees in the following set
    \[
      Fr = \{\phi' \mid \phi' \mbox{ is a maximal subtree of } \phi_{(s,c)} \mbox{
        without } c \mbox{ as instance argument of any node}\}
    \]
    The set $Fr$ contains ground trees corresponding to the trees in
    $\id{frontier}_t(\psi)$ and possibly $0$ leaves. Since we assumed that
    frontier subsumption property is satisfied, for every $\phi' \in Fr$ that is
    not a $0$ leaf, it holds that every explanation in $\phi_{(s,c+1)}$ contains
    a subexplanation in $\phi'$. Therefore, $\phi_{(s,c)} \lor \phi_{(s,c+1)}$,
    can be computed equivalently as
    $\phi_{(s,c)} \lor (\neg \widehat{\psi}_t [c/t] \land \phi_{(s,c+1)})$.
    Since $\neg \widehat{\psi}_t [c/t]$ contains only internal nodes with
    instance argument $c$, the explanations of $\neg \widehat{\psi}_t[c/t]$ are
    independent of explanations in $\phi_{(s,c+1)}$. Further, the explanations
    of $\phi_{(s,c)}$ are mutually exclusive with explanations in
    $\neg \widehat{\psi}_t [c/t]$.  Therefore the probability
    $prob(\phi_{(s,c)} \lor \phi_{(s,c+1)})$ can be computed as
    $prob(\phi_{(s,c)}) + (1 - prob(\widehat{\psi}_t[c/t])) \cdot prob(\psi_{(s,c+1)})$.
    The probability of the complete disjunction
    $\bigvee_{c \in range(t, \eta\sigma)}
    (s,c)[\alpha_i:Gr(\Omega,\eta,\psi_i,\sigma[c/t])]$ is obtained by the
    expression
      \begin{align*}
        prob(\phi_{(s,l)}) + (1 - prob(\widehat{\psi}_t[l/t])) & \times (\\
                             prob(\phi_{(s,l+1)}) + &(1 - prob(\widehat{\psi}_t[l+1/t])) \times (\\
                                                                & \cdots\\
                                                                & (1 - prob(\widehat{\psi}_t[u-1/t])) \times prob(\phi_{(s,u)})))
      \end{align*}
      Now consider $f(\sigma,\psi)$ for the same $\psi$, $f(\sigma,\psi) =
      h(\sigma[l/t],\psi)$. The expansion of $h(\sigma[l/t],\psi)$ is as follows
      \begin{align*}
        g(\sigma[l/t],\psi) + (1 - prob(\widehat{\psi}_t[l/t])) & \times (\\
                              g(\sigma[l+1/t],\psi) + & (1 - prob(\widehat{\psi}_t[l+1/t])) \times (\\
                                                                 & \cdots \\
                                                                 & (1 - prob(\widehat{\psi}_t[u-1/t]) \times g(\sigma[u/t],\psi)
      \end{align*}
      For a given ground tree $\phi_{(s,c)}$, $prob(\phi_{(s,c)}) =
      \sum_{\alpha_i \in D_s} \pi_s(\alpha_i) \cdot
      prob(Gr(\Omega,\eta,\psi_i,\sigma[c/t]))$. Similarly $g(\sigma[c/t],\psi)
      = \sum_{\alpha_i \in D_s} \pi_s(\alpha_i) \cdot
      f(\sigma[c/t],\psi_i)$. But by inductive hypothesis,
      $Prob(Gr(\Omega,\eta,\psi_i,\sigma[c/t])) =
      f(\sigma[c/t],\psi_i)$. Therefore, $prob(\phi_{(s,c)}) =
      g(\sigma[c/t],\psi)$. Therefore $prob(\phi) = f(\sigma,\psi)$. When,
      $\eta\sigma$ is not satisfiable, $prob(\phi) = f(\sigma,\psi) =
      0$. Therefore, the theorem is proved.
\end{description}
\end{proof}


\bibliographystyle{acmtrans}
\bibliography{bibliography}

\label{lastpage}

\end{document}